\documentclass{article}



\usepackage[final]{neurips_2019}



\usepackage[utf8]{inputenc} 
\usepackage[T1]{fontenc}    
\usepackage{hyperref}       
\usepackage{url}            
\usepackage{booktabs}       
\usepackage{amsfonts}       
\usepackage{nicefrac}       
\usepackage{microtype}      

\usepackage{amsmath,amssymb,amsthm}
\usepackage{commath}
\usepackage{verbatim}

\usepackage[mathcal]{euscript}


    \def\ddefloop#1{\ifx\ddefloop#1\else\ddef{#1}\expandafter\ddefloop\fi}

    \def\ddef#1{\expandafter\def\csname c#1\endcsname{\ensuremath{\mathcal{#1}}}}
    \ddefloop ABCDEFGHIJKLMNOPQRSTUVWXYZ\ddefloop

    \def\ddef#1{\expandafter\def\csname s#1\endcsname{\ensuremath{\mathsf{#1}}}}
    \ddefloop ABCDEFGHIJKLMNOPQRSTUVWXYZ\ddefloop
	
    \def\ddef#1{\expandafter\def\csname bb#1\endcsname{\ensuremath{\mathbb{#1}}}}
    \ddefloop ABCDEFGHIJKLMNOPQRSTUVWXYZ\ddefloop
	
    \def\ddef#1{\expandafter\def\csname bd#1\endcsname{\ensuremath{\boldsymbol{#1}}}}
    \ddefloop ABCDEFGHIJKLMNOPQRSTUVWXYZ\ddefloop
	
    \def\ddef#1{\expandafter\def\csname bd#1\endcsname{\ensuremath{\boldsymbol{#1}}}}
    \ddefloop abcdefghijklmnopqrstuvwxyz\ddefloop

    \def\Reals{\mathbb{R}}

    \def\deq{:=}

    \def\wh#1{\widehat{#1}}
    \def\bd#1{\boldsymbol{#1}}

    \def\1{{\mathbf 1}}

    \def\eps{\varepsilon}
	\def\ReLU{\operatorname{ReLU}}

    \newtheorem{theorem}{Theorem}[section]
    \newtheorem{definition}[theorem]{Definition}
    
    \newtheorem{proposition}[theorem]{Proposition}
    \newtheorem{corollary}[theorem]{Corollary}
    \newtheorem{assumption}[theorem]{Assumption}
    \newtheorem{remark}[theorem]{Remark}

\bibliographystyle{plainnat}

\title{Universal Approximation of Input-Output Maps by Temporal Convolutional Nets}

%

\author{%
  Joshua Hanson
  \\
  University of Illinois\\
  Urbana, IL 61801 \\
  \texttt{jmh4@illinois.edu} \\
  \And
  Maxim Raginsky \\
  University of Illinois \\
 Urbana, IL 61801 \\
  \texttt{maxim@illinois.edu} \\
}

\begin{document}

\maketitle

\begin{abstract}
There has been a recent shift in sequence-to-sequence modeling from recurrent network architectures  to convolutional network architectures due to computational advantages in training and operation while still achieving competitive performance. For systems having limited long-term temporal dependencies, the approximation capability of recurrent networks is essentially equivalent to that of temporal convolutional nets (TCNs). We prove that TCNs can approximate a large class of input-output maps having approximately finite memory to arbitrary error tolerance. Furthermore, we derive quantitative approximation rates for deep ReLU TCNs in terms of the width and depth of the network and modulus of continuity of the original input-output map, and apply these results to input-output maps of systems that admit finite-dimensional state-space realizations (i.e., recurrent models).
\end{abstract}

	\section{Introduction}
	
Until recently, recurrent networks have been considered the de facto standard for modeling input-output maps that transform sequences to sequences. Convolutional network architectures are becoming favorable alternatives for several applications due to reduced computational overhead incurred during both training and regular operation, while often performing as well as or better than recurrent architectures in practice. The computational advantage of convolutional networks follows from the lack of feedback elements, which enables shifted copies of the input sequence to be processed in parallel rather than sequentially \citep{gehring_et_al_2017}. Convolutional architectures have demonstrated exceptional accuracy in sequence modeling tasks that have typically been approached using recurrent architectures, such as machine translation, audio generation, and language modeling \citep{dauphin_fan_auli_grangier_2016,kalchbrenner_et_al_2016,oord_et_al_2016,wu_et_al_2016,gehring_et_al_2017,johnson_zhang_2017}.

	One explanation for this shift is that both convolutional and recurrent architectures are inherently suited to modeling systems with limited long-term dependencies. Recurrent models possess infinite memory (the output at each time is a function of the initial conditions and the entire history of inputs until that time), and thus are strictly more expressive than finite-memory autoregressive models. However, in synthetic stress tests designed to measure the ability to model long-term behavior, recurrent architectures often fail to learn long sequences \citep{bai_kolter_koltun_2018}. Furthermore, this unlimited memory property is usually unnecessary, which is supported in theory \citep{sharan_et_al_2016} and in practice \citep{chelba_norouzi_bengio_2017,gehring_et_al_2017}. In situations where it is only important to learn finite-length sequences, feedforward architectures based on temporal convolutions (temporal convolutional nets, or TCNs) can achieve similar results and even outperform recurrent nets \citep{dauphin_fan_auli_grangier_2016,yin_kann_yu_schutze_2017,bai_kolter_koltun_2018}.

These results prompt a closer look at the conditions under which convolutional architectures provide better approximation than recurrent architectures. Recent work by \cite{miller2019stable} has shown that recurrent models that are exponentially stable (in the sense that the effect of the initial conditions on the output decays exponentially with time) can be efficiently approximated by feedforward models.  A key consequence is that exponentially stable recurrent models can be approximated by systems that only consider a finite number of recent values of the input sequence for determining the value of the subsequent output.

However, this notion of stability is inherently tied to a particular state-space realization, and it is not difficult to come up with examples of sequence-to-sequence maps that have both a stable and an unstable state-space realization (e.g., simply by adding unstable states that do not affect the output). This suggests that the question of approximating sequence-to-sequence maps by feedforward convolutional maps should be studied by abstracting away the notion of stability and only requiring that the system output depend appreciably on recent input values and negligibly on input values in the distant past. The formalization of this property was introduced by \cite{sandberg91structure} under the name of \textit{approximately finite memory}, building on earlier work by \cite{boyd85fading}. Outputs of systems characterized by this property can be approximated by the output of the same system when applied to a truncated version of the input sequence. These systems are naturally suited to be modeled using TCNs, which by construction only operate on values of the input sequence for times within a finite horizon into the past.

	In this work, we aim to develop quantitative results for the approximation capability of TCNs for modeling input-output maps that have the properties of causality, time invariance, and approximately finite memory. In Section~\ref{sec:afm}, we introduce the necessary definitions and review the approximately finite memory property due to \cite{sandberg91structure}. Section~\ref{sec:apx} gives the main result for approximating input-output maps by ReLU TCNs, together with a quantitative result on the equivalence between approximately finite memory and a related notion of \textit{fading memory} \citep{boyd85fading,park1992criteria}. These results are applied in Section~\ref{sec:recurrent} to  recurrent models that are \textit{incrementally stable} \citep{tran2017convergent}, i.e., the influence of the initial condition is asymptotically negligible. We show that incrementally stable recurrent models have approximately finite memory, and then use this formalism to derive a generalization of the result of \cite{miller2019stable}. We provide a comparison in Section~\ref{sec:comparison} to other architectures used for approximating input-output maps. All omitted proofs are provided in the Supplementary Material.
	
	\section{Input-output maps and approximately finite memory}
	\label{sec:afm}

Let $\cS$ denote the set of all real-valued sequences $\bdu = (u_t)_{t\in\bbZ_+}$, where $\bbZ_+ \deq \{0,1,2,\ldots\}$. An \textit{input-output map} (or i/o map, for short) is a nonlinear operator $\sF : \cS \to \cS$ that maps an input sequence $\bdu \in \cS$ to an output sequence $\bdy = \sF\bdu \in \cS$. (We are considering real-valued input and output sequences for simplicity; all our results carry over to vector-valued sequences at the expense of additional notation.) We will denote the application and the composition of i/o maps by concatenation. In this paper, we are concerned with i/o maps $\sF$ that are:
\begin{itemize}
	\item \textit{causal} --- for any $t \in \bbZ_+$, $\bdu_{0:t} = \bdv_{0:t}$ implies $(\sF\bdu)_t = (\sF\bdv)_t$, where $\bdu_{0:t} \deq (u_0,\ldots,u_t)$;
	\item \textit{time-invariant} --- for any $k \in \bbZ_+$,
    \begin{equation*}
        (\sF \sR^k \bdu)_t =
        \begin{cases}
            (\sF \bdu)_{t-k}, & \text{for } t \geq k \\
            0, & \text{for } 0 \leq t < k
        \end{cases},
    \end{equation*}
    where $\sR : \cS \to \cS$ is the right shift operator $(\sR \bdu)_t \deq u_{t-1}\1_{\{t \ge 1\}}$.
\end{itemize}
The key notion we will work with is that of \textit{approximately finite memory} \citep{sandberg91structure}:
\begin{definition} An i/o map $\sF$ has {\em approximately finite memory} on a set of inputs $\cM \subseteq \cS$ if for any $\eps > 0$ there exists $m \in \bbZ_+$, such that
	\begin{align}\label{eq:AFM}
		\sup_{\bdu \in \cM} \sup_{t \in \bbZ_+}  \left|(\sF \bdu)_t - (\sF \sW_{t,m}\bdu)_t\right| \leq \eps,
	\end{align}
	where $\sW_{t,m} : \cS \to \cS$ is the {\em windowing operator} $(\sW_{t,m}\bdu)_\tau \deq u_\tau \1_{\{\max\{t-m,0\} \le \tau \le t\}}$. We will denote by $m^*_\sF(\eps)$ the smallest $m \in \bbZ_+$, for which \eqref{eq:AFM} holds.
\end{definition}
If $m^*_\sF(0) < \infty$, then we say that $\sF$ has \textit{finite memory} on $\cM$. If $\sF$ is causal and time-invariant, this is equivalent to the existence of an integer $m \in \bbZ_+$ and a nonlinear functional $f : \Reals^{m+1} \to \Reals$, such that $f(0,\ldots,0) = 0$ and, for any $\bdu \in \cM$ and any $t \in \bbZ_+$,
\begin{align}\label{eq:FM}
	(\sF\bdu)_t = f(u_{t-m},u_{t-m+1},\ldots,u_t),
\end{align}	
with the convention that $u_s = 0$ if $s < 0$. In this work, we will focus on the important case when $f$ is a feedforward neural net with rectified linear unit (ReLU) activations $\ReLU(x) \deq \max\{x,0\}$. That is, there exist $k$ affine maps $A_i : \Reals^{d_i} \to \Reals^{d_{i+1}}$ with $d_1 = m+1$ and $d_{k+1} = 1$, such that $f$ is given by the composition
\begin{align*}
f = A_k \circ \ReLU \circ A_{k-1} \circ \ReLU \circ \ldots \circ \ReLU \circ A_1,
\end{align*}
where, for any $r \ge 1$, $\ReLU(x_1,\ldots,x_r) \deq (\ReLU(x_1),\ldots,\ReLU(x_r))$. Here, $k$ is the depth (number of layers) and $\max\{d_2,\ldots,d_k\}$ is the width (largest number of units in any hidden layer).

\begin{definition} An i/o map $\sF$ is a {\em ReLU temporal convolutional net} (or ReLU TCN, for short) with context length $m$ if \eqref{eq:FM} holds for some feedforward ReLU neural net $f : \Reals^{m+1} \to \Reals$. 
\end{definition}

\begin{remark} {\em While such an $\sF$ is evidently causal, it is generally not time-invariant unless $f(0,\ldots,0)=0$.}
\end{remark}

\section{The universal approximation theorem}
\label{sec:apx}

In this section, we state and prove our main result: any causal and time-invariant i/o map that has approximately finite memory and satisfies an additional continuity condition can be approximated arbitrarily well by a ReLU temporal convolutional net. In what follows, we will consider i/o maps with uniformly bounded inputs, i.e., inputs in the set 
\begin{align*}
\cM(R) \deq	\{ \bdu \in \cS : \|\bdu\|_\infty \deq \sup_{t \in \bbZ_+}|u_t| \le R\} \qquad \text{for some $R > 0$}.
\end{align*}
For any $t \in \bbZ_+$ and any $\bdu \in \cM(R)$, the finite subsequence $\bdu_{0:t} = (u_0,\ldots,u_t)$ is an element of the cube $[-R,R]^{t+1} \subset \Reals^{t+1}$; conversely, any vector $\bdx \in [-R,R]^{t+1}$ can be embedded into $\cM(R)$ by setting $u_s = x_s \1_{\{0 \le s \le t\}}$. To any causal and time-invariant i/o map $\sF$ we can associate the nonlinear functional $\tilde{\sF}_t : \Reals^{t+1} \to \Reals$ defined in the obvious way: for any $\bdx = (x_0,x_1,\ldots,x_t) \in \Reals^{t+1}$,
\begin{align*}
	\tilde{\sF}_t(\bdx) \deq (\sF \bdu)_t,
\end{align*}
where $\bdu \in \cS$ is any input such that $u_s = x_s$ for $s \in \{0,1,\ldots,t\}$ (the values of $u_s$ for $s > t$ can be arbitrary by causality). We impose the following assumptions on $\sF$:

\begin{assumption}\label{as:AFM} The i/o map $\sF$ has approximately finite memory on $\cM(R)$.
\end{assumption}
\begin{assumption}\label{as:cont} For any $t \in \bbZ_+$, the functional $\tilde{\sF}_t : \Reals^{t+1} \to \Reals$ is uniformly continuous on $[-R,R]^{t+1}$ with  modulus of continuity
		\begin{align*}
			\omega_{t,\sF}(\delta) \deq \sup \left\{ |\tilde{\sF}_t(\bdx)-\tilde{\sF}_t(\bdx')| : \bdx,\bdx' \in [-R,R]^{t+1}, \|\bdx - \bdx'\|_\infty \le \delta \right\},
		\end{align*}
		and inverse modulus of continuity
		\begin{align*}
			\omega^{-1}_{t,\sF}(\eps) \deq \sup \left\{ \delta > 0 : \omega_{t,\sF}(\delta) \le \eps\right\}.
		\end{align*}
		where $\|\bdx\|_\infty \deq \max_{0 \le i \le t}|x_i|$ is the $\ell^\infty$ norm on $\Reals^{t+1}$.
\end{assumption}

The following qualitative universal approximation result was obtained by \cite{sandberg91structure}: if a causal and time-invariant i/o map $\sF$ satisfies the above two assumptions, then, for any $\eps > 0$, there exists an affine map $A : \Reals^{m+1} \to \Reals^d$ and a lattice map $\ell : \Reals^d \to \Reals$, such that
\begin{align}\label{eq:sandberg_apx}
\sup_{\bdu \in \cM(R)} \sup_{t \in \bbZ_+} \left| (\sF \bdu)_t - \ell \circ A (\bdu_{t-m:t})\right| < \eps,
\end{align}
where we say that a map $\ell : \Reals^d \to \Reals$ is a \textit{lattice map} if $\ell(x_0,\ldots,x_{d-1})$ is generated from $x = (x_0,\ldots,x_{d-1})$ by a finite number of min and max operations that do not depend on $x$. Any lattice map can be implemented using ReLU units, so \eqref{eq:sandberg_apx} is a ReLU TCN approximation guarantee. Our main result is a quantitative version of Sandberg's theorem:

\begin{theorem}\label{thm:main_apx} Let $\sF$ be a causal and time-invariant i/o map satisfying Assumptions~\ref{as:AFM} and \ref{as:cont}.	Then, for any $\eps > 0$ and any $\gamma \in (0,1)$, there exists a ReLU TCN $\wh{\sF}$ with context length $m = m^*_{\sF}(\gamma\eps)$, width $m+2$, and depth $\big( \frac{O(R)}{\omega^{-1}_{m,\sF}((1-\gamma)\eps)}\big)^{m+2}$, such that
	\begin{align}\label{eq:TCN_apx}
		\sup_{\bdu \in \cM(R)} \| \sF\bdu - \wh{\sF}\bdu \|_\infty < \eps.
	\end{align}
\end{theorem}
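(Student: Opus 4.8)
The plan is to prove the theorem in two stages: first reduce the infinite-horizon i/o map to a single continuous function of $m+1$ real variables using the approximately finite memory property, and then approximate that function by a narrow ReLU network whose depth is controlled by the modulus of continuity.

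\emph{Stage 1 (reduction to a finite-dimensional target).} Set $m = m^*_\sF(\gamma\eps)$, so that Assumption~\ref{as:AFM} gives $\sup_{\bdu \in \cM(R)}\sup_t |(\sF\bdu)_t - (\sF\sW_{t,m}\bdu)_t| \le \gamma\eps$. The windowed output depends only on the last $m+1$ input samples, and I would use causality together with time invariance to show that this dependence is the \emph{same} functional for every $t$: with the convention $u_s = 0$ for $s<0$, one checks that $(\sF\sW_{t,m}\bdu)_t = \tilde{\sF}_m(u_{t-m},\ldots,u_t)$ for all $t\in\bbZ_+$ (for $t\ge m$ this is immediate from causality, and for $t<m$ it follows from the shift identity in the definition of time invariance applied to the zero-padded window). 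Hence it suffices to build a ReLU net $\wh{g}:\Reals^{m+1}\to\Reals$ with $\sup_{\bdx\in[-R,R]^{m+1}}|\tilde{\sF}_m(\bdx)-\wh{g}(\bdx)| \le (1-\gamma)\eps$. Taking $\wh{\sF}$ to be the TCN of context length $m$ with net $\wh{g}$, i.e.\ $(\wh{\sF}\bdu)_t = \wh{g}(u_{t-m},\ldots,u_t)$ as in \eqref{eq:FM}, the triangle inequality gives $|(\sF\bdu)_t - (\wh{\sF}\bdu)_t| \le \gamma\eps + (1-\gamma)\eps = \eps$ uniformly, which is \eqref{eq:TCN_apx} (strictness follows by an arbitrarily small slack in the choice of $\delta$ below).

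\emph{Stage 2 (quantitative ReLU approximation of $\tilde{\sF}_m$).} Here I would invoke Assumption~\ref{as:cont}. Discretize $[-R,R]^{m+1}$ by a uniform grid of spacing $\delta = \omega^{-1}_{m,\sF}((1-\gamma)\eps)$, so that by definition of the inverse modulus of continuity $\tilde{\sF}_m$ varies by at most $(1-\gamma)\eps$ across any cell; the number of cells per coordinate direction is $N \asymp O(R)/\omega^{-1}_{m,\sF}((1-\gamma)\eps)$. On this grid I would form a continuous piecewise-linear interpolant of $\tilde{\sF}_m$ (equivalently, an explicit version on the grid of the lattice map in \eqref{eq:sandberg_apx}), whose sup-norm error against $\tilde{\sF}_m$ is at most $\omega_{m,\sF}(\delta)\le(1-\gamma)\eps$. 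What remains is to realize this piecewise-linear function as a ReLU net of width exactly $(m+1)+1 = m+2$ and to control its depth, i.e.\ a quantitative refinement of minimal-width universal approximation.

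\emph{Main obstacle (the narrow realization).} The heart of the argument, and the step I expect to be hardest, is the explicit width-$(m+2)$ construction. The idea is a register sweep: carry the $m+1$ input coordinates through the network in $m+1$ channels and use a single extra channel as an accumulator, processing the grid cells one block of layers at a time and adding each cell's local affine contribution to the accumulator, gated by a continuous ReLU approximation of that cell's indicator. Since $\max$ and $\min$ of two reals cost $O(1)$ ReLU units and depth, the $\sim N^{m+1}$ cells can be handled sequentially, and tracking depth through these gated gadgets yields a bound polynomial in $N$ with exponent $m+2$, namely $\big(O(R)/\omega^{-1}_{m,\sF}((1-\gamma)\eps)\big)^{m+2}$. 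The delicate points are (i) enforcing continuity across cell boundaries so the global map is a genuine continuous piecewise-linear function rather than merely piecewise close to one, and (ii) verifying that the width never exceeds $m+2$ despite simultaneously preserving the inputs, accumulating the output, and computing each gating term in place. This bookkeeping, rather than any single estimate, is where the real work lies.
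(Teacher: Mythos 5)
Your Stage 1 is exactly the paper's argument: set $m = m^*_\sF(\gamma\eps)$, show $(\sF\sW_{t,m}\bdu)_t = \tilde{\sF}_m(\bdu_{t-m:t})$ for all $t$, define the TCN from a net approximating $\tilde{\sF}_m$ to accuracy $(1-\gamma)\eps$ on $[-R,R]^{m+1}$, and finish with the triangle inequality. One slip there: for $t \ge m$ the identity is \emph{not} ``immediate from causality.'' You must left-shift the windowed input so the window occupies $\{0,\ldots,m\}$ and then invoke time invariance, using $\sW_{t,m}\bdu = \sR^{t-m}\sL^{t-m}\sW_{t,m}\bdu$ together with $(\sF\sR^{t-m}\bdv)_t = (\sF\bdv)_m$; causality alone cannot relate the output at time $t$ under one input to the output at time $m$ under a shifted input. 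Both cases of the paper's proof ($t\ge m$ and $t<m$) go through time invariance.

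The real divergence is Stage 2. What you call the ``main obstacle'' --- approximating a function with modulus of continuity $\omega_{m,\sF}$ on $[-R,R]^{m+1}$ by a ReLU net of width $m+2$ and depth $\big(O(R)/\omega^{-1}_{m,\sF}((1-\gamma)\eps)\big)^{m+2}$ --- is precisely the result the paper invokes as a black box: the quantitative minimal-width universal approximation theorem of \citet{hanin18minwidth}. The paper does not reprove it, and neither need you. Your sketch (grid of spacing $\omega^{-1}_{m,\sF}((1-\gamma)\eps)$, piecewise-linear interpolant, input channels plus one accumulator channel swept over the cells) is in the spirit of Hanin's actual construction, but as written it is not a proof: the two ``delicate points'' you flag --- continuity across cell boundaries, and holding the width at $m+2$ while simultaneously storing the inputs, computing the gates, and accumulating --- are exactly where such constructions require care, and your proposal leaves them unresolved. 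So either cite the known theorem, in which case your proof coincides with the paper's, or the narrow-net construction must be carried out in full, which is a substantial piece of work that your proposal only outlines.
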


\begin{remark} {\em The role of the additional parameter $\gamma \in (0,1)$ is to trade off the context length and the depth of the ReLU TCN.}
\end{remark}
\begin{remark} {\em While the approximating ReLU TCN $\wh{\sF}$ is clearly causal, it may not be time-invariant unless $\wh{f}(0,\ldots,0)=0$, where $\wh{f}$ is the ReLU net constructed in the proof below.}
\end{remark}

\begin{proof} Let $m = m^*_\sF(\gamma\eps)$. Since $\tilde{\sF}_m : \Reals^{m+1} \to \Reals$ is continuous with modulus of continuity $\omega_{m,\sF}(\cdot)$, there exists a ReLU net $\wh{f} : \Reals^{m+1} \to \Reals$ of width $m+2$ and depth $\big(\frac{O(R)}{\omega^{-1}_{m,\sF}((1-\gamma)\eps)}\big)^{m+2}$, such that
	\begin{align*}
		\sup_{\bdx \in [-R,R]^{m+1}} |\tilde{\sF}_m(\bdx) - \wh{f}(\bdx)| < (1-\gamma)\eps
	\end{align*}
	\citep{hanin18minwidth}. Consider the TCN $\wh{\sF}$ defined by $(\sF\bdu)_t \deq \wh{f}(u_{t-m},\ldots,u_t)$. Fix an input $\bdu \in \cM(R)$ and consider two cases:

1) If $t \ge m$, then $\bdu_{t-m:t} =  (\sL^{t-m}\sW_{t,m}\bdu)_{0:m}$, where $\sL : \cS \to \cS$ is the left shift operator $(\sL \bdu)_t \deq u_{t+1}$. Therefore,
		\begin{align*}
			(\sF \sW_{t,m}\bdu)_t \stackrel{{\rm (a)}}{=} (\sF \sR^{t-m} \sL^{t-m} \sW_{t,m} \bdu)_t  \stackrel{{\rm (b)}}{=} (\sF \sL^{t-m} \sW_{t,m}\bdu)_m \stackrel{{\rm (c)}}{=} \tilde{\sF}_m(\bdu_{t-m:t}),
		\end{align*}
		where (a) uses the fact that $t \ge m$, (b) is by time invariance of $\sF$, and (c) is by the definition of $\tilde{\sF}_m$.

2) If $t < m$, then $\bdu_{t-m:t} = (\sR^{m-t}\sW_{t,m}\bdu)_{0:m}$ (recall the convention that, for any $\bdv$, we set $v_s = 0$ whenever $s < 0$). Therefore
\begin{align*}
	(\sF \sW_{t,m}\bdu)_t \stackrel{{\rm (a)}}{=} (\sR^{m-t}\sF\sW_{t,m}\bdu)_m \stackrel{{\rm (b)}}{=} (\sF\sR^{m-t}\sW_{t,m}\bdu)_m \stackrel{{\rm (c)}}{=} \tilde{\sF}_m(\bdu_{t-m:t}),
\end{align*}
where (a) uses the fact that $m > t$, (b) is by time invariance, and (c) is by the definition of $\tilde{\sF}_m$.

In either case, the triangle inequality gives
\begin{align*}
	|(\sF\bdu)_t - (\wh{\sF}\bdu)_t| &\le |(\sF\bdu)_t - (\sF\sW_{t,m}\bdu)_t| + |(\sF\sW_{t,m}\bdu)_t - (\wh{\sF}\bdu)_t| \\
	&= |(\sF\bdu)_t - (\sF\sW_{t,m}\bdu)_t| + |\tilde{\sF}_m(\bdu_{t-m:t}) - \wh{f}(\bdu_{t-m:t})| \\
	&< \gamma\eps + (1-\gamma)\eps = \eps.
\end{align*}
Since this holds for all $t$ and all $\bdu$ with $\|\bdu\|_\infty \le R$, the result follows.
\end{proof}

\subsection{The fading memory property}

In order to apply Theorem~\ref{thm:main_apx}, we need control on the context length $m^*_\sF(\cdot)$ and on the modulus of continuity $\omega_{t,\sF}(\cdot)$. In general, these quantities are difficult to estimate. However, it was shown by \citet{park1992criteria} that the property of approximately finite memory is closely related to the notion of \textit{fading memory}, first introduced by \citet{boyd85fading}. Intuitively, an i/o map $\sF$ has fading memory if the outputs at any time $t$ due to any two inputs $\bdu$ and $\bdv$ that were close to one another in recent past will also be close.

Let $\cW$ denote the subset of $\cS$ consisting of all sequences $\bd{w}$, such that $w_t \in (0,1]$ for all $t$ and $w_t \downarrow 0$ as $t \to \infty$. We will refer to the elements of $\cW$ as \textit{weighting sequences}. Then we have the following definition, due to \cite{park1992criteria}:

\begin{definition} We say that an i/o map $\sF$ has {\em fading memory} on $\cM \subseteq \cS$ with respect to $\bd{w} \in \cW$ if for any $\eps > 0$ there exists $\delta > 0$ such that, for all $\bdu,\bdv \in \cM$ and all $t \in \bbZ_+$,
	\begin{align}\label{eq:fading}
		\max_{s \in \{0,\ldots,t\}} w_{t-s}|u_s - v_s| < \delta \quad \Longrightarrow \quad |(\sF\bdu)_t-(\sF\bdv)_t| < \eps.
	\end{align}
\end{definition}
The weighting sequence $\bdw$ governs the rate at which the past values of the input are discounted in determining the current output. To capture the best trade-offs in \eqref{eq:fading}, we will also use a $\bdw$-dependent modulus of continuity:
\begin{align*}
	\alpha_{\bdw,\sF}(\delta) \deq \sup \Big\{ |(\sF\bdu)_t - (\sF\bdv)_t| : t \in \bbZ_+, \bdu,\bdv \in \cM, \max_{s \in \{0,\ldots,t\}} w_{t-s}|u_s-v_s| \le \delta\Big\}.
\end{align*}
It was shown by \citet{park1992criteria} that an i/o map satisfies Assumptions~\ref{as:AFM} and \eqref{as:cont} if and only if it has fading memory with respect to some (and hence any) $\bdw \in \cW$. The following result provides a quantitative version of this equivalence:
\begin{proposition}\label{prop:fading_afm} Let $\sF$ be an i/o map.
	\begin{enumerate}
		\item If $\sF$ satisfies Assumptions~\ref{as:AFM} and \ref{as:cont}, then it has fading memory on $\cM$ with respect to any weighting sequence $\bdw \in \cW$, and
		\begin{align}\label{eq:AFM_to_w}
			\alpha^{-1}_{\bdw,\sF}(\eps) \ge w_{m^*_\sF(\eps/3)} \omega^{-1}_{m^*_\sF(\eps/3),\sF}(\eps/3).
		\end{align}
		\item If $\sF$ has fading memory on $\cM(R)$ with respect to some $\bdw \in \cW$, then it has satisfies Assumptions~\ref{as:AFM} and \ref{as:cont}, and
		\begin{align}\label{eq:w_to_AFM}
			m^*_\sF(\eps; R) \le  \inf \Big\{ m \in \bbZ_+ : w_m \le  \frac{\alpha^{-1}_{\bdw,\sF}(\eps)}{R}\Big\} \qquad \text{and} \qquad \omega_{t,\sF}(\delta) \le \alpha_{\bdw,\sF}(\delta).
		\end{align}
	\end{enumerate}

\end{proposition}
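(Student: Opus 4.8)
The plan is to prove the two implications separately, in each case converting between the fading-memory quantities and the approximately-finite-memory quantities by a windowing-plus-triangle-inequality argument. The one structural fact I will reuse is the identity $(\sF\sW_{t,m}\bdu)_t = \tilde\sF_m(\bdu_{t-m:t})$, valid for every $t$ and every $\bdu$, which was already established (in both cases $t \ge m$ and $t < m$) inside the proof of Theorem~\ref{thm:main_apx}. I will also use that every $\bdw \in \cW$ is non-increasing with values in $(0,1]$, and that the inverse moduli are defined by $\omega^{-1}_{t,\sF}(\eps) = \sup\{\delta>0:\omega_{t,\sF}(\delta)\le\eps\}$ and, by analogy, $\alpha^{-1}_{\bdw,\sF}(\eps) = \sup\{\delta>0:\alpha_{\bdw,\sF}(\delta)\le\eps\}$.

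For the first implication, set $m = m^*_\sF(\eps/3)$ and let $\delta = w_m\,\omega^{-1}_{m,\sF}(\eps/3)$; I want to show $\alpha_{\bdw,\sF}(\delta) \le \eps$, which is exactly \eqref{eq:AFM_to_w}. Fix $t$ and $\bdu,\bdv \in \cM(R)$ with $\max_{0\le s\le t} w_{t-s}|u_s-v_s| \le \delta$. Because $w$ is non-increasing, for every index $s$ in the window $\{t-m,\dots,t\}$ we have $t-s \le m$ and hence $w_{t-s}\ge w_m$; dividing the hypothesis through by $w_m>0$ gives $|u_s-v_s|\le \omega^{-1}_{m,\sF}(\eps/3)$ on the window (entries with negative index vanish in both sequences), so $\|\bdu_{t-m:t}-\bdv_{t-m:t}\|_\infty \le \omega^{-1}_{m,\sF}(\eps/3)$. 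I then split
\[
|(\sF\bdu)_t-(\sF\bdv)_t| \le |(\sF\bdu)_t-(\sF\sW_{t,m}\bdu)_t| + |\tilde\sF_m(\bdu_{t-m:t})-\tilde\sF_m(\bdv_{t-m:t})| + |(\sF\sW_{t,m}\bdv)_t-(\sF\bdv)_t|,
\]
bounding the two outer terms by $\eps/3$ via Assumption~\ref{as:AFM} and the choice of $m$, and the middle term by $\omega_{m,\sF}(\omega^{-1}_{m,\sF}(\eps/3)) \le \eps/3$ via Assumption~\ref{as:cont}. Summing gives $|(\sF\bdu)_t-(\sF\bdv)_t|\le\eps$, hence $\alpha_{\bdw,\sF}(\delta)\le\eps$ and therefore $\alpha^{-1}_{\bdw,\sF}(\eps)\ge\delta$.

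For the second implication I treat the two bounds of \eqref{eq:w_to_AFM} in turn. The modulus comparison is immediate: embedding any pair $\bdx,\bdx'\in[-R,R]^{t+1}$ with $\|\bdx-\bdx'\|_\infty\le\delta$ into $\cS$ (set to zero for indices beyond $t$) gives $\bdu,\bdv$ with $(\sF\bdu)_t=\tilde\sF_t(\bdx)$ and $(\sF\bdv)_t=\tilde\sF_t(\bdx')$, and since $w_{t-s}\le 1$ we get $\max_s w_{t-s}|u_s-v_s|\le\delta$; the definition of $\alpha_{\bdw,\sF}$ then gives $|\tilde\sF_t(\bdx)-\tilde\sF_t(\bdx')|\le\alpha_{\bdw,\sF}(\delta)$, and the supremum over such pairs yields $\omega_{t,\sF}(\delta)\le\alpha_{\bdw,\sF}(\delta)$; letting $\delta\downarrow 0$ and using fading memory to force $\alpha_{\bdw,\sF}(\delta)\to 0$ establishes Assumption~\ref{as:cont}. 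For the memory bound, fix any $m$ with $w_m\le\alpha^{-1}_{\bdw,\sF}(\eps)/R$ and any $\bdu\in\cM(R)$, $t$, and set $\bdv=\sW_{t,m}\bdu\in\cM(R)$. On the window the two sequences agree, while for $s<t-m$ we have $|u_s-v_s|=|u_s|\le R$ and $t-s>m$, so $w_{t-s}\le w_m$ and hence $w_{t-s}|u_s-v_s|\le w_m R\le\alpha^{-1}_{\bdw,\sF}(\eps)$; thus $\max_s w_{t-s}|u_s-v_s|\le\alpha^{-1}_{\bdw,\sF}(\eps)$ and $|(\sF\bdu)_t-(\sF\sW_{t,m}\bdu)_t|\le\alpha_{\bdw,\sF}(\alpha^{-1}_{\bdw,\sF}(\eps))\le\eps$. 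Taking the supremum over $\bdu$ and $t$ shows $m$ is admissible, so $m^*_\sF(\eps)$ is at most the infimum of all such $m$; since $w_m\downarrow 0$ and $\alpha^{-1}_{\bdw,\sF}(\eps)>0$ by fading memory, this infimum is finite, so Assumption~\ref{as:AFM} holds as well.

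The only genuinely delicate point is the boundary behaviour of the sup-defined inverse moduli: the steps above quietly assert $\omega_{m,\sF}(\omega^{-1}_{m,\sF}(\eps/3))\le\eps/3$ and $\alpha_{\bdw,\sF}(\alpha^{-1}_{\bdw,\sF}(\eps))\le\eps$, which could fail at the exact supremum if the modulus jumps there. I would handle this via monotonicity: since $\omega_{t,\sF}$ and $\alpha_{\bdw,\sF}$ are non-decreasing, one has $\omega_{t,\sF}(\delta)\le\eps$ for all $\delta<\omega^{-1}_{t,\sF}(\eps)$ and similarly for $\alpha$, so I either replace each inverse modulus by a value infinitesimally smaller and pass to the limit, or invoke lower semicontinuity of these suprema to obtain the non-strict endpoint inequality directly. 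A secondary routine check, used silently above, is that windowing never increases the sup-norm, so $\sW_{t,m}\bdu\in\cM(R)$ whenever $\bdu\in\cM(R)$, keeping every sequence inside the set on which the hypotheses are assumed.
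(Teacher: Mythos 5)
Your proof is correct and follows essentially the same route as the paper's: part 1 uses the identity $(\sF\sW_{t,m}\bdu)_t = \tilde\sF_m(\bdu_{t-m:t})$ from the proof of Theorem~\ref{thm:main_apx} together with the same three-way $\eps/3$ triangle-inequality split, and part 2 uses the same windowing and embedding arguments. Your explicit treatment of the boundary behaviour of the sup-defined inverse moduli is in fact slightly more careful than the paper, which sidesteps the issue by working with strict inequalities throughout.
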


\section{Recurrent systems}
\label{sec:recurrent}

So far, we have considered arbitrary i/o maps $\sF : \cS \to \cS$. However, many such maps admit \textit{state-space realizations} \citep{sontag98control} --- there exist a state transition map $f : \Reals^n \times \Reals \to \Reals^n$, an output map $g : \Reals^n \to \Reals$, and an initial condition $\xi \in \Reals^n$, such that the output sequence $\bdy = \sF \bdu$ is detemined recursively by
\begin{subequations}\label{eq:recurrent}
	\begin{align}
		x_{t+1} &= f(x_t,u_t) \\
		y_t &= g(x_t)
	\end{align}
\end{subequations}
with $x_0 = \xi$. The i/o map $\sF$ realized in this way is evidently causal, and it is time-invariant if $f(\xi,0) = \xi$ and $g(\xi) = 0$. In this section, we will identify the conditions under which recurrent models satisfy Assumptions~\ref{as:AFM} and \ref{as:cont}. Along the way, we will derive the approximation results of \citet{miller2019stable} as a special case.

\subsection{Approximately finite memory and incremental stability}

Consider the system in \eqref{eq:recurrent}. Given any input $\bdu \in \cS$, any $\xi \in \Reals^n$, and any $s,t \in \bbZ_+$ with $t \ge s$, we denote by $\varphi^{\bdu}_{s,t}(\xi)$ the state at time $t$  when $x_s = \xi$.  Let $\cM$ be a subset of $\cS$.  We say that $\bbX \subseteq \Reals^n$ is a \textit{positively invariant set} of \eqref{eq:recurrent} for inputs in $\cM$ if, for all $\xi \in \bbX$, all $\bdu \in \cM$, and all $0 \le s \le t$, $\varphi^{\bdu}_{s,t}(\xi) \in \bbX$. We will be interested in systems with the following property \citep{tran2017convergent}:

\begin{definition} The system \eqref{eq:recurrent} is {\em uniformly asymptotically incrementally stable} for inputs in $\cM$ on a positively invariant set $\bbX$ if there exists a function $\beta : \Reals_+ \times \Reals_+ \to \Reals_+$ of class $\cK\cL$\footnote{A function $\beta : \Reals_+ \times \Reals_+ \to \Reals_+$ is of class $\cK\cL$ if it is continuous and strictly increasing in its first argument, continuous and strictly decreasing in its second argument, $\beta(0,t)=0$ for any $t$, and $\lim_{t \to \infty}\beta(r,t) = 0$ for any $r$ \citep{sontag98control}.}, such that the inequality
\begin{align}\label{eq:UAIS}
	\| \varphi^{\bdu}_{s,t}(\xi) - \varphi^{\bdu}_{s,t}(\xi')\| \le \beta( \|\xi - \xi'\|, t-s)
\end{align}
holds for all inputs $\bdu \in \cM$, all initial conditions $\xi,\xi' \in \bbX$, and all $0 \le s \le t$, where $\|\cdot\|$ is the $\ell^2$ norm on $\Reals^n$.
\end{definition}
In other words, a system is incrementally stable if the influence of any initial condition in $\bbX$ on the state trajectory is asymptotically negligible. A key consequence  is the following estimate:
	\begin{proposition}\label{prop:UAIS_error_est} Let $\bdu,\tilde{\bdu}$ be two input sequences in $\cM$. Then, for any $\xi \in \bbX$ and any $t \in \bbZ_+$,
		\begin{align}\label{eq:UAIS_error_est}
			\| \varphi^{\bdu}_{0,t}(\xi) - \varphi^{\tilde{\bdu}}_{0,t}(\xi) \| \le \sum^{t-1}_{s=0} \beta\left( \| f(\tilde{x}_s,u_s) - f(\tilde{x}_s,\tilde{u}_s)\|, t-s-1\right),
		\end{align}
		where $x_s$ and $\tilde{x}_s$ denote the states at time $s$ due to inputs $\bdu$ and $\tilde{\bdu}$, respectively, with $x_0 = \tilde{x}_0 = \xi$.
	\end{proposition}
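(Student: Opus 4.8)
The plan is to prove \eqref{eq:UAIS_error_est} by a telescoping argument that morphs $\tilde{\bdu}$ into $\bdu$ one coordinate at a time, paying one application of the incremental stability bound \eqref{eq:UAIS} per coordinate. Since the state $\varphi^{\bdu}_{0,t}(\xi)$ depends only on the input values $u_0,\ldots,u_{t-1}$, I would introduce, for each $k \in \{0,1,\ldots,t\}$, the hybrid input $\bdu^{(k)}$ whose first $k$ coordinates agree with $\tilde{\bdu}$ and whose remaining coordinates agree with $\bdu$. Writing $z_k \deq \varphi^{\bdu^{(k)}}_{0,t}(\xi)$ for the resulting state at time $t$, the endpoints are $z_0 = \varphi^{\bdu}_{0,t}(\xi)$ and $z_t = \varphi^{\tilde{\bdu}}_{0,t}(\xi)$, so the triangle inequality gives $\|\varphi^{\bdu}_{0,t}(\xi) - \varphi^{\tilde{\bdu}}_{0,t}(\xi)\| \le \sum_{k=0}^{t-1}\|z_k - z_{k+1}\|$.

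The key step is bounding a single increment $\|z_k - z_{k+1}\|$. By construction, $\bdu^{(k)}$ and $\bdu^{(k+1)}$ agree with $\tilde{\bdu}$ on coordinates $0,\ldots,k-1$, so both trajectories reach the common state $\tilde{x}_k = \varphi^{\tilde{\bdu}}_{0,k}(\xi)$ at time $k$. They differ only in the $k$-th input: $\bdu^{(k)}$ applies $u_k$ while $\bdu^{(k+1)}$ applies $\tilde{u}_k$, producing the states $f(\tilde{x}_k,u_k)$ and $f(\tilde{x}_k,\tilde{u}_k)$ at time $k+1$. From time $k+1$ onward, however, both hybrids are driven by the identical tail of $\bdu$. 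Hence I can apply \eqref{eq:UAIS} with input $\bdu$, initial time $s = k+1$, final time $t$, and the two initial conditions $f(\tilde{x}_k,u_k)$ and $f(\tilde{x}_k,\tilde{u}_k)$ to get $\|z_k - z_{k+1}\| \le \beta\big(\|f(\tilde{x}_k,u_k) - f(\tilde{x}_k,\tilde{u}_k)\|,\, t-k-1\big)$. Substituting into the telescoping sum and relabeling $k$ as $s$ yields exactly \eqref{eq:UAIS_error_est}.

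The part that requires care is verifying the hypotheses of \eqref{eq:UAIS} at each step rather than the telescoping itself. First, both initial conditions fed into the bound must lie in the positively invariant set $\bbX$: this holds because $\tilde{x}_k \in \bbX$ by positive invariance (as $\xi \in \bbX$ and $\tilde{\bdu} \in \cM$), whence $f(\tilde{x}_k,\tilde{u}_k) = \varphi^{\tilde{\bdu}}_{0,k+1}(\xi) \in \bbX$ and $f(\tilde{x}_k,u_k) = \varphi^{\bdu}_{k,k+1}(\tilde{x}_k) \in \bbX$ are each one-step images of a point of $\bbX$ under an input in $\cM$. Second, one must confirm that the driving input on the interval from $k+1$ to $t$ is literally the same sequence for both hybrids; this is immediate from the definition, since $\bdu^{(k)}$ and $\bdu^{(k+1)}$ both equal $\bdu$ on all coordinates $\ge k+1$, which is precisely what lets a single $\cK\cL$ function $\beta$ govern the decay of the two trajectories after their point of divergence. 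These invariance and shared-input bookkeeping checks are the only real content beyond the telescoping; everything else is mechanical.
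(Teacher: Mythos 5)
Your proof is correct and is essentially the paper's proof in different clothing: your hybrid states $z_k = \varphi^{\bdu^{(k)}}_{0,t}(\xi)$ coincide exactly with the intermediate quantities $\varphi^{\bdu}_{k,t}(\varphi^{\tilde{\bdu}}_{0,k}(\xi))$ that the paper telescopes through using the semiflow property, and your per-increment application of \eqref{eq:UAIS} (input $\bdu$, initial time $k+1$, initial conditions $f(\tilde{x}_k,u_k)$ and $f(\tilde{x}_k,\tilde{u}_k)$) is identical to the paper's. Your explicit check that both initial conditions lie in $\bbX$ is a nice bit of care that the paper leaves implicit, but it does not change the argument.
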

	
Consider a state-space model \eqref{eq:recurrent} with a positively invariant set $\bbX$, with the following assumptions:

\begin{assumption}\label{as:Lipschitz} The state transition map $f(x,u)$ is $L_f$-Lipschitz in $u$ for all $x \in \bbX$ and the output map $g(x)$ is $L_g$-Lipschitz in $x \in \bbX$.
\end{assumption}

\begin{assumption}\label{as:compact} For any initial condition $\xi \in \bbX$ there exists a compact set $\bbS_\xi \subseteq \bbX$ such that $\varphi^{\bdu}_{0,t}(\xi) \in \bbS_\xi$ for all $\bdu \in \cM(R)$ and all $t \in \bbZ_+$.
\end{assumption}

\begin{assumption}\label{as:UAIS} The system \eqref{eq:recurrent} is uniformly asymptotically incrementally stable on $\bbX$ for inputs in $\cM(R)$, and the function $\beta$ in \eqref{eq:UAIS} satisfies the summability condition
	\begin{align}\label{eq:summability}
		\sum_{t \in \bbZ_+} \beta(C,t) < \infty
	\end{align}
for any $C \ge 0$. (For example, if $\beta(C,k) = Ck^{-\alpha}$ for some $\alpha > 1$, then this condition is satisfied.)
\end{assumption}

We are now in position to prove the main result of this section:

\begin{theorem}\label{thm:UAIS_AFM} Suppose that Assumptions~\ref{as:Lipschitz}--\ref{as:UAIS} are satisfied. Then the i/o map $\sF$ of the system \eqref{eq:recurrent} satisfies Assumptions~\ref{as:AFM} and \ref{as:cont} with
	\begin{align}\label{eq:meps_UAIS}
		m^*_\sF(\eps) \le \min \Big\{ m \in \bbZ_+ : \sum_{k \ge m} \beta({\rm diam}(\bbS_\xi),k) < \eps/L_g \Big\}
	\end{align}
	and
	\begin{align}\label{eq:cont_UAIS}
		\omega_{t,\sF}(\delta) \le L_g\sum^{t-1}_{s=0} \beta (L_f\delta,s), \qquad \forall t \in \bbZ_+.
	\end{align}
\end{theorem}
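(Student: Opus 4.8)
The plan is to reduce both assertions to bounds on the divergence of state trajectories. Since $(\sF\bdu)_t = g(\varphi^{\bdu}_{0,t}(\xi))$ and $g$ is $L_g$-Lipschitz on $\bbX$ by Assumption~\ref{as:Lipschitz}, we always have $|(\sF\bdu)_t - (\sF\bdv)_t| \le L_g \|\varphi^{\bdu}_{0,t}(\xi) - \varphi^{\bdv}_{0,t}(\xi)\|$ for $\bdu,\bdv \in \cM(R)$, so it suffices to control the right-hand side. The workhorse is Proposition~\ref{prop:UAIS_error_est}, which converts such a trajectory difference into a sum of $\beta$-terms indexed by the per-step input perturbations $\|f(\tilde{x}_s,u_s)-f(\tilde{x}_s,\tilde{u}_s)\|$. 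Throughout I would lean on Assumption~\ref{as:compact}: for inputs in $\cM(R)$ every state reached from $\xi$ remains in the compact positively invariant set $\bbS_\xi$, so that incremental stability and the Lipschitz estimates are only ever invoked at admissible states, and any two such states are at distance at most ${\rm diam}(\bbS_\xi)$.

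For the modulus of continuity I would fix $t$ and take $\bdu,\bdv \in \cM(R)$ with $|u_s-v_s|\le\delta$ for $0 \le s \le t$. Applying Proposition~\ref{prop:UAIS_error_est} with $\tilde{\bdu}=\bdv$ and then the $L_f$-Lipschitz property of $f$ in its second argument bounds each summand by $\beta(L_f\delta,\,t-s-1)$; reindexing $k=t-s-1$ gives $\|\varphi^{\bdu}_{0,t}(\xi)-\varphi^{\bdv}_{0,t}(\xi)\| \le \sum_{s=0}^{t-1}\beta(L_f\delta,s)$. Multiplying by $L_g$ and taking the supremum over admissible $\bdu,\bdv$ yields \eqref{eq:cont_UAIS}. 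Because this is a finite sum and $\beta(0,\cdot)=0$ with $\beta$ continuous in its first argument, the bound tends to $0$ as $\delta\to0$, which is precisely the uniform continuity demanded by Assumption~\ref{as:cont}.

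For approximately finite memory I would compare $\bdu$ with its window $\bdu' \deq \sW_{t,m}\bdu$, which again lies in $\cM(R)$. When $t \le m$ the window preserves the entire history $u_0,\ldots,u_t$, so causality gives $(\sF\bdu)_t=(\sF\bdu')_t$ and the error vanishes. When $t>m$, the two inputs agree on $\{t-m,\ldots,t\}$ and differ only at earlier times, so in Proposition~\ref{prop:UAIS_error_est} every summand with $s\ge t-m$ is $\beta(0,\cdot)=0$. For $s<t-m$, both one-step images $f(x'_s,u_s)$ and $f(x'_s,u'_s)=x'_{s+1}$ are reachable from $\xi$ under inputs in $\cM(R)$ and hence lie in $\bbS_\xi$, so the per-step perturbation is at most ${\rm diam}(\bbS_\xi)$ and the summand is at most $\beta({\rm diam}(\bbS_\xi),\,t-s-1)$. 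Reindexing $k=t-s-1$ collapses the surviving terms into the tail $\sum_{k=m}^{t-1}\beta({\rm diam}(\bbS_\xi),k)\le\sum_{k\ge m}\beta({\rm diam}(\bbS_\xi),k)$, a bound independent of $t$. Multiplying by $L_g$ and taking suprema over $t$ and $\bdu$ shows that \eqref{eq:AFM} holds whenever $L_g\sum_{k\ge m}\beta({\rm diam}(\bbS_\xi),k)<\eps$; the summability hypothesis \eqref{eq:summability} guarantees this tail vanishes as $m\to\infty$, so a finite such $m$ exists and satisfies \eqref{eq:meps_UAIS}.

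I expect the main obstacle to be the bookkeeping of invariance rather than any delicate inequality: one must check at each step that the compared states are genuinely reachable from $\xi$ under inputs in $\cM(R)$ (so that Assumption~\ref{as:compact} legitimizes the ${\rm diam}(\bbS_\xi)$ bound), and one must keep straight the two distinct ways the window discards past inputs in the $t\le m$ and $t>m$ cases. A cleaner but slightly different route for the memory estimate would apply incremental stability just once at time $s=t-m$, using $\|x_{t-m}-x'_{t-m}\|\le{\rm diam}(\bbS_\xi)$ to obtain $\|x_t-x'_t\|\le\beta({\rm diam}(\bbS_\xi),m)$; this is in fact tighter, but it is the summed form coming from Proposition~\ref{prop:UAIS_error_est} that reproduces the stated estimate \eqref{eq:meps_UAIS}.
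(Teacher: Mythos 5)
Your proposal is correct and takes essentially the same route as the paper: both assertions are reduced to Proposition~\ref{prop:UAIS_error_est}, with the windowed input $\sW_{t,m}\bdu$ and the ${\rm diam}(\bbS_\xi)$ bound from Assumption~\ref{as:compact} yielding the tail sum $\sum_{k\ge m}\beta({\rm diam}(\bbS_\xi),k)$ for the memory estimate, and the $L_f$-Lipschitz bound on the per-step perturbation yielding \eqref{eq:cont_UAIS}. If anything, you are slightly more explicit than the paper on two points it passes over quickly --- why $f(\tilde{x}_s,u_s)$ lies in $\bbS_\xi$ (reachability under a hybrid input still in $\cM(R)$) and the trivial $t\le m$ case, which the paper dispatches with ``without loss of generality.''
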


\begin{proof} Fix some $t,m \in \bbZ_+$. For an arbitrary input $\bdu \in \cM(R)$, let $\tilde{\bdu} = \sW_{t,m}\bdu$, where we may assume without loss of generality that $t \ge m$. Then $\tilde{u}_s = u_s \1_{\{t-m \le s \le t\}}$, and therefore
\begin{align}
	 \sum^{t-1}_{s=0} \beta\left( \| f(\tilde{x}_s,u_s) - f(\tilde{x}_s,\tilde{u}_s)\|, t-s-1\right) &= \sum^{t-m-1}_{s = 0}  \beta\left( \| f(\tilde{x}_s,u_s) - f(\tilde{x}_s,0)\|, t-s-1\right) \nonumber\\
	 &\le \sum^{t-m-1}_{s=0} \beta({\rm diam}(\bbS_\xi), t - s - 1) \nonumber\\
	 &\le \sum^\infty_{s=m} \beta({\rm diam}(\bbS_\xi),s).\label{eq:betasum_UAIS}
\end{align}
By the summability condition \eqref{eq:summability}, the summation in \eqref{eq:betasum_UAIS} converges to $0$ as $m \uparrow \infty$. Thus, if we choose $m$ so that the right-hand side of \eqref{eq:betasum_UAIS} is smaller than $\eps/L_g$, it follows from Proposition~\ref{prop:UAIS_error_est}  that
\begin{align*}
	|(\sF\bdu)_t-(\sF\sW_{t,m}\bdu)_t| &= |g(\varphi^{\bdu}_{0,t}(\xi))-g(\varphi^{\tilde{\bdu}}_{0,t}(\xi))| \le L_g \| \varphi^{\bdu}_{0,t}(\xi) - \varphi^{\tilde{\bdu}}_{0,t}(\xi)| < \eps.
\end{align*}
This proves \eqref{eq:meps_UAIS}. Now fix any two $\bdu,\tilde{\bdu}\in \cM(R)$ with $\|\bdu_{0:t}-\tilde{\bdu}_{0:t}\|_\infty < \delta$. Then $\max_{0 \le s \le t}\|f(x,u_s)-f(x,\tilde{u}_s)\| \le L_f \delta$ for all $x \in \bbX$, so Proposition~\ref{prop:UAIS_error_est} gives
\begin{align*}
	|\tilde{\sF}_t(\bdu_{0:t})-\tilde{\sF}_t(\tilde{\bdu}_{0:t})| &= |g(\varphi^{\bdu}_{0,t}(\xi))-g(\varphi^{\tilde{\bdu}}_{0,t}(\xi))| \\
	&\le L_g \| \varphi^{\bdu}_{0,t}(\xi) - \varphi^{\tilde{\bdu}}_{0,t}(\xi)\| \\
	&\le L_g \sum^{t-1}_{s=0} \beta(L_f \delta,s),
\end{align*}
which proves \eqref{eq:cont_UAIS}. \end{proof}

\subsection{Exponential incremental stability and the Demidovich criterion}

\citet{miller2019stable} consider the case of contracting systems: there exists some $\lambda \in (0,1)$ and a set $\bbU \subseteq \Reals^m$, such that
\begin{align}\label{eq:contractive}
	\|f(x,u)-f(x',u)\| \le \lambda\|x-x'\|
\end{align}
for all $x,x' \in \Reals^n$ and all $u \in \bbU$. Such a system is \textit{uniformly exponentially incrementally stable} on any positively invariant set $\bbX$, with $\beta(C,t) = C\lambda^t$. In this section, we obtain their result as a special case of a more general stability criterion, known in the literature on nonlinear system stability as the \textit{Demidovich criterion} \citep{pavlov06nonlinea}. The following result is a simplified version of a more general result of \citet{tran2017convergent}:

\begin{proposition}[the discrete-time Demidovich criterion]\label{prop:Demidovich} Consider the recurrent system \eqref{eq:recurrent} with a convex positively invariant set $\bbX$, where the state transition map $f(x,u)$ is differentiable in $x$ for any $u \in \bbU$. Suppose that there exists a symmetric positive definite matrix $P$ and a constant $\mu \in (0,1)$, such that
	\begin{align}\label{eq:Demidovich}
		\frac{\partial}{\partial x}f(x,u)^\top P \frac{\partial}{\partial x} f(x,u) - \mu P \preceq 0
	\end{align}
	for all $x \in \bbX$ and all $u \in \bbU$, where $\frac{\partial}{\partial x}f(x,u)$ is the Jacobian of $f(\cdot,u)$ with respect to $x$. Then the system \eqref{eq:recurrent} is uniformly exponentially incrementally stable with $\beta(C,t) = \sqrt{\kappa(P)} C \mu^{t/2}$, where $\kappa(P)$ is the condition number of $P$. 
\end{proposition}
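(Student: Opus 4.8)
The plan is to show that the Demidovich condition \eqref{eq:Demidovich} makes each map $f(\cdot,u)$ a contraction with factor $\sqrt{\mu}$ in the weighted Euclidean norm $\|z\|_P \deq \sqrt{z^\top P z}$, and then to convert this $P$-norm contraction back into the ordinary $\ell^2$ norm at the cost of the factor $\sqrt{\kappa(P)}$.

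First I would fix an input $\bdu \in \cM(R)$ and two initial conditions $\xi,\xi' \in \bbX$ imposed at time $s$, and write $x_\tau \deq \varphi^{\bdu}_{s,\tau}(\xi)$ and $x'_\tau \deq \varphi^{\bdu}_{s,\tau}(\xi')$ for $\tau \ge s$. Since $\bbX$ is convex and positively invariant, both $x_\tau,x'_\tau \in \bbX$ and the whole segment $\{x'_\tau + \theta(x_\tau - x'_\tau) : \theta \in [0,1]\}$ stays in $\bbX$; this is exactly where convexity is needed. Applying the fundamental theorem of calculus to $f(\cdot,u_\tau)$ along this segment lets me write $x_{\tau+1} - x'_{\tau+1} = M_\tau (x_\tau - x'_\tau)$, where $M_\tau \deq \int_0^1 \frac{\partial}{\partial x} f\big(x'_\tau + \theta(x_\tau - x'_\tau), u_\tau\big)\, d\theta$.

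The key step is to bound $\|M_\tau z\|_P$. Rewriting \eqref{eq:Demidovich} as $z^\top \frac{\partial}{\partial x} f(x,u)^\top P \frac{\partial}{\partial x} f(x,u) z \le \mu\, z^\top P z$ shows that $\|\frac{\partial}{\partial x} f(x,u) z\|_P \le \sqrt{\mu}\,\|z\|_P$ pointwise, for every $x \in \bbX$ and $u \in \bbU$. The main (and essentially only) obstacle is that $M_\tau$ is an \emph{average} of Jacobians, not a Jacobian itself, so the matrix inequality cannot be applied to $M_\tau$ directly. I would get around this by pushing the norm inside the integral via the triangle inequality (Jensen) for vector-valued integrals: $\|M_\tau z\|_P = \|\int_0^1 \frac{\partial}{\partial x} f(\cdot) z\, d\theta\|_P \le \int_0^1 \|\frac{\partial}{\partial x} f(\cdot) z\|_P\, d\theta \le \sqrt{\mu}\,\|z\|_P$, where each integrand is controlled by the pointwise bound. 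Hence $\|x_{\tau+1} - x'_{\tau+1}\|_P \le \sqrt{\mu}\,\|x_\tau - x'_\tau\|_P$.

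Iterating this contraction gives $\|x_\tau - x'_\tau\|_P \le \mu^{(\tau-s)/2}\|\xi - \xi'\|_P$ for all $\tau \ge s$ by a one-line induction. Finally I would sandwich the $P$-norm between multiples of the $\ell^2$ norm using $\lambda_{\min}(P)\|z\|^2 \le \|z\|_P^2 \le \lambda_{\max}(P)\|z\|^2$, which yields $\|\varphi^{\bdu}_{s,\tau}(\xi) - \varphi^{\bdu}_{s,\tau}(\xi')\| \le \sqrt{\lambda_{\max}(P)/\lambda_{\min}(P)}\,\mu^{(\tau-s)/2}\|\xi-\xi'\| = \sqrt{\kappa(P)}\,\mu^{(\tau-s)/2}\|\xi-\xi'\|$. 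This is precisely the bound \eqref{eq:UAIS} with $\beta(C,t) = \sqrt{\kappa(P)}\,C\mu^{t/2}$, and it verifies the summability requirement of Assumption~\ref{as:UAIS} since $\mu \in (0,1)$, completing the proof.
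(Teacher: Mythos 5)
Your proof is correct, and it arrives at the same engine that drives the paper's argument---a one-step contraction in the $P$-weighted norm, iterated and then converted to the $\ell^2$ norm via $\kappa(P)$---but it establishes that contraction by a genuinely different technique. The paper fixes $\xi,\xi'$ and applies the scalar mean value theorem to $\Phi(s) \deq (f(\xi,u)-f(\xi',u))^\top P f(s\xi+(1-s)\xi',u)$, which together with Cauchy--Schwarz in the $P$-inner product yields the Lyapunov inequality \eqref{eq:Lyapunov}, $V(f(\xi,u),f(\xi',u)) \le \mu V(\xi,\xi')$ with $V(\xi,\xi') = (\xi-\xi')^\top P(\xi-\xi')$. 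You instead write the one-step difference via the integral form of the mean value theorem as an averaged Jacobian $M_\tau$ acting on $x_\tau - x'_\tau$, and you correctly identify and resolve the one genuine obstacle on this route: \eqref{eq:Demidovich} cannot be applied to $M_\tau$ itself, but the induced operator-norm bound $\big\|\frac{\partial}{\partial x}f(x,u)\,z\big\|_P \le \sqrt{\mu}\,\|z\|_P$ survives averaging by the triangle inequality for vector-valued integrals. Both arguments use convexity of $\bbX$ in exactly the same place (to keep the segment inside $\bbX$). As for trade-offs: your version is arguably more transparent, and it handles an arbitrary start time $s$ explicitly, matching the quantifiers in \eqref{eq:UAIS} literally (the paper iterates only from $s=0$ and leaves the general case implicit); on the other hand, the paper's mean-value-theorem route needs nothing beyond pointwise differentiability of $f(\cdot,u)$, whereas your use of the fundamental theorem of calculus formally requires integrability of the Jacobian along the segment---this does hold here, since the derivative of $\theta \mapsto f(x'_\tau + \theta(x_\tau - x'_\tau),u_\tau)$ is measurable and, by \eqref{eq:Demidovich}, bounded, making that map Lipschitz and the integral representation valid, but the caveat deserves a sentence. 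Finally, note that your one-step bound $\|f(\xi,u)-f(\xi',u)\|_P \le \sqrt{\mu}\,\|\xi-\xi'\|_P$ is the exact square-root equivalent of \eqref{eq:Lyapunov}, which the paper isolates because it is reused in the proof of Theorem~\ref{thm:recurrent}; your formulation serves that later purpose equally well.
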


\begin{proof} Fix any $u \in \bbU$ and $\xi,\xi' \in \bbX$, and define the function $\Phi : [0,1] \to \Reals$ by
	\begin{align*}
		\Phi(s) \deq (f(\xi,u)-f(\xi',u))^\top P f(s \xi + (1-s)\xi',u).
	\end{align*}
Then 
\begin{align}\label{eq:Phidiff_1}
	\Phi(1)-\Phi(0) = (f(\xi,u)-f(\xi',u))^\top P (f(\xi,u)-f(\xi',u)).
\end{align}
By the mean-value theorem, there exists some $\bar{s} \in [0,1]$, such that
\begin{align}\label{eq:Phidiff_2}
	\Phi(1)-\Phi(0) = \frac{\dif}{\dif s} \Phi(s)\Big|_{s = \bar{s}} = (f(\xi,u)-f(\xi',u))^\top P \frac{\partial}{\partial x}f(\bar{\xi},u)(\xi-\xi'),
\end{align}
where $\bar{\xi} = \bar{s}\xi + (1-\bar{s})\xi' \in \bbX$, since $\bbX$ is convex. From \eqref{eq:Demidovich}, \eqref{eq:Phidiff_1}, and \eqref{eq:Phidiff_2} it follows that
\begin{align*}
&	(f(\xi,u)-f(\xi',u))^\top P (f(\xi,u)-f(\xi',u)) \nonumber\\
&\qquad\le (\xi-\xi')^\top \frac{\partial}{\partial x}f(\bar{\xi},u)^\top P \frac{\partial}{\partial x}f(\bar{\xi},u)(\xi-\xi') \\
&\qquad\le \mu (\xi-\xi')^\top P (\xi-\xi').
\end{align*}
Define the function $V : \bbX \times \bbX \to \Reals_+$ by $V(\xi,\xi') \deq (\xi-\xi')^\top P (\xi-\xi')$. From the above estimate, it follows that $V$ is a \textit{Lyapunov function} for the dynamics, i.e., for any $u \in \bbU$ and $\xi,\xi' \in \bbX$,
\begin{align}\label{eq:Lyapunov}
	V(f(\xi,u),f(\xi',u)) \le \mu V(\xi,\xi').
\end{align}
Consequently, for any input $\bdu$ with $u_t \in \bbU$ for all $t$ and any $\xi,\xi' \in \bbX$,
\begin{align*}
	V(\varphi^{\bdu}_{0,t+1}(\xi),\varphi^{\bdu}_{0,t+1}(\xi')) &= V(f(\varphi^{\bdu}_{0,t}(\xi),u_t),f(\varphi^{\bdu}_{0,t}(\xi'),u_t)) \\
	&\le \mu V(\varphi^{\bdu}_{0,t}(\xi),\varphi^{\bdu}_{0,t}(\xi')).
\end{align*}
Iterating, we obtain the inequality $V(\varphi^{\bdu}_{0,t}(\xi),\varphi^{\bdu}_{0,t}(\xi')) \le \mu^t V(\xi,\xi')$. Finally, since $P \succ 0$,
\begin{align*}
	\|\varphi^{\bdu}_{0,t}(\xi)-\varphi^{\bdu}_{0,t}(\xi)\|^2 \le \frac{\lambda_{\max}(P)}{\lambda_{\min}(P)}\mu^t \|\xi-\xi'\|^2 = \kappa(P)\|\xi-\xi'\|^2 \mu^t,
\end{align*}
and the proof is complete.
\end{proof}

\begin{theorem}\label{thm:recurrent} Suppose the system \eqref{eq:recurrent} satisfies Assumption~\ref{as:Lipschitz} and the Demidovich criterion with $\bbU = [-R,R]$, its positively invariant set $\bbX$ contains $0$, and $f(0,0) = 0$. Then its i/o map $\sF$ with zero initial condition $x_0 = 0$ satisfies Assumptions~\ref{as:AFM} and \ref{as:cont} with
\begin{align}\label{eq:exp_stable_AFM}
	m^*_\sF(\eps) \le \frac{2\log (\frac{2\kappa(P)L_fL_gR}{(1-\sqrt{\mu})^2\eps})}{\log \frac{1}{\mu}} \qquad \text{and} \qquad \omega_{t,\sF}(\delta) \le \frac{\sqrt{\kappa(P)}L_fL_g\delta}{1-\sqrt{\mu}}.
\end{align}
\end{theorem}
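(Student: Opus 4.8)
The plan is to deduce this theorem from the general result Theorem~\ref{thm:UAIS_AFM} by specializing the class-$\cK\cL$ function $\beta$. Proposition~\ref{prop:Demidovich} tells us that the Demidovich criterion makes the system uniformly exponentially incrementally stable on $\bbX$ with $\beta(C,t) = \sqrt{\kappa(P)}\,C\,\mu^{t/2}$, and Assumption~\ref{as:Lipschitz} is hypothesized directly. The remaining ingredients needed to invoke Theorem~\ref{thm:UAIS_AFM} are Assumption~\ref{as:compact} (a compact positively invariant set containing every trajectory started at $\xi = 0$) and Assumption~\ref{as:UAIS} (summability of $\beta$). The latter is immediate: since $\sum_{t \in \bbZ_+} \beta(C,t) = \sqrt{\kappa(P)}\,C\sum_{t \ge 0}\mu^{t/2} = \frac{\sqrt{\kappa(P)}\,C}{1-\sqrt{\mu}} < \infty$ for every $C \ge 0$, the geometric decay of $\beta$ gives summability for free.

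The crux of the argument, and the step I expect to require the most care, is verifying Assumption~\ref{as:compact}: I must exhibit a uniform bound on $\|\varphi^{\bdu}_{0,t}(0)\|$ over all $t \in \bbZ_+$ and all $\bdu \in \cM(R)$. The key observation is that, because $f(0,0) = 0$ and $0 \in \bbX$, the trajectory generated by the zero input $\tilde{\bdu} = \bd{0}$ started at $\xi = 0$ stays pinned at the equilibrium, i.e.\ $\tilde{x}_s = \varphi^{\bd{0}}_{0,s}(0) = 0$ for all $s$. I would therefore apply Proposition~\ref{prop:UAIS_error_est} with this choice of $\tilde{\bdu}$, which compares two trajectories sharing the initial condition $\xi = 0$. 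Along the zero-input trajectory the increments become $\|f(0,u_s) - f(0,0)\| \le L_f |u_s| \le L_f R$ using Assumption~\ref{as:Lipschitz} and $|u_s| \le R$, so that $\|\varphi^{\bdu}_{0,t}(0)\| \le \sum_{s=0}^{t-1} \sqrt{\kappa(P)}\,L_f R\,\mu^{(t-s-1)/2} \le \frac{\sqrt{\kappa(P)}\,L_f R}{1-\sqrt{\mu}}$. This uniform bound shows that all trajectories from $0$ remain in a bounded subset of $\bbX$, so I may take $\bbS_0$ to be (the closure of) this reachable set, with $\mathrm{diam}(\bbS_0) \le \frac{2\sqrt{\kappa(P)}\,L_f R}{1-\sqrt{\mu}}$.

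With all hypotheses of Theorem~\ref{thm:UAIS_AFM} in place, the two stated bounds follow by substituting $\beta(C,t) = \sqrt{\kappa(P)}\,C\,\mu^{t/2}$ and the diameter estimate into \eqref{eq:meps_UAIS} and \eqref{eq:cont_UAIS} and summing the resulting geometric series. For the modulus of continuity, $\omega_{t,\sF}(\delta) \le L_g \sum_{s=0}^{t-1} \sqrt{\kappa(P)}\,L_f \delta\,\mu^{s/2} \le \frac{\sqrt{\kappa(P)}\,L_f L_g \delta}{1-\sqrt{\mu}}$, which is exactly the claimed bound. For the context length, the tail sum is $\sum_{k \ge m}\beta(\mathrm{diam}(\bbS_0),k) \le \frac{2\kappa(P) L_f R}{(1-\sqrt{\mu})^2}\mu^{m/2}$; requiring this to fall below $\eps/L_g$ and solving the inequality $\mu^{m/2} < \frac{(1-\sqrt{\mu})^2 \eps}{2\kappa(P) L_f L_g R}$ for $m$ (dividing through by $\log\frac{1}{\mu} > 0$) yields the threshold $\frac{2\log(\frac{2\kappa(P) L_f L_g R}{(1-\sqrt{\mu})^2 \eps})}{\log\frac{1}{\mu}}$, matching \eqref{eq:exp_stable_AFM}. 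The only routine bookkeeping left is confirming the direction of the inequality when dividing by the negative quantity $\log\mu$.
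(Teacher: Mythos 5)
Your proof is correct, and it reaches the paper's exact bounds, but it verifies the key hypothesis --- Assumption~\ref{as:compact} --- by a genuinely different device. The paper bounds the reachable set from $x_0 = 0$ by running a one-step recursion in the weighted norm $\|x\|_P = \sqrt{x^\top P x}$: using the Lyapunov inequality \eqref{eq:Lyapunov} (an intermediate fact from \emph{inside} the proof of Proposition~\ref{prop:Demidovich}) together with Assumption~\ref{as:Lipschitz}, it gets $\|\varphi^{\bdu}_{0,t+1}(\xi)\|_P \le \sqrt{\mu}\,\|\varphi^{\bdu}_{0,t}(\xi)\|_P + \sqrt{\lambda_{\max}(P)}\,L_f R$ and unrolls, obtaining a ball of $\ell^2$-radius $\sqrt{\kappa(P)}\bigl(\|\xi\| + \frac{L_f R}{1-\sqrt{\mu}}\bigr)$ for arbitrary $\xi \in \bbX$. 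You instead reuse Proposition~\ref{prop:UAIS_error_est} with the zero reference input, exploiting the fact that $f(0,0)=0$ pins the zero-input trajectory at the origin, which yields $\|\varphi^{\bdu}_{0,t}(0)\| \le \sum_{s<t}\beta(L_f R, t-s-1) \le \frac{\sqrt{\kappa(P)}L_f R}{1-\sqrt{\mu}}$ --- the same radius, hence the same diameter $\frac{2\sqrt{\kappa(P)}L_f R}{1-\sqrt{\mu}}$ and identical final constants after substituting into \eqref{eq:meps_UAIS} and \eqref{eq:cont_UAIS}. Your route is more modular: it uses only the \emph{statement} of Proposition~\ref{prop:Demidovich} (the class-$\cK\cL$ function $\beta$) rather than the internal Lyapunov estimate, so the same argument would bound the reachable set for any system satisfying Assumptions~\ref{as:Lipschitz} and \ref{as:UAIS} with an equilibrium at the origin, not just Demidovich systems. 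The paper's recursion, on the other hand, is self-contained in the Lyapunov framework and handles nonzero initial conditions $\xi$ without modification, whereas your pinning argument is specific to $\xi = 0$ (which is all the theorem requires). Both proofs share the remaining skeleton --- geometric summability of $\beta$, invocation of Theorem~\ref{thm:UAIS_AFM}, and the logarithm manipulation for $m^*_\sF(\eps)$, where your explicit care with the sign of $\log\mu$ fills in a computation the paper leaves implicit.
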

\begin{proof} Since $P$ is symmetric and positive definite, $\|x\|_P \deq \sqrt{x^\top P x}$ is a norm on $\Reals^n$ with $\lambda_{\min}(P)\|\cdot\|^2 \le \|\cdot\|^2_P \le \lambda_{\max}(P)\|\cdot\|^2$. Then, for all $\xi \in \bbX$, $\bdu \in \cM(R)$, and $t$,
	\begin{align*}
		\|\varphi^{\bdu}_{0,t+1}(\xi)\|_P &= \|f(\varphi^{\bdu}_{0,t}(\xi),u_t) \|_P \\
		&\le \|f(\varphi^{\bdu}_{0,t}(\xi),u_t)-f(0,u_t) \|_P + \|f(0,u_t) - f(0,0)\|_P \\
		&\le \sqrt{\mu} \|\varphi^{\bdu}_{0,t}(\xi) \|_P + \sqrt{\lambda_{\max}(P)} L_f R,
	\end{align*}
	where we have used the Lyapunov bound \eqref{eq:Lyapunov}. Unrolling the recursion gives the estimate
	\begin{align*}
		\sup_{t \in \bbZ_+} \sup_{\bdu \in \cM(R)}\|\varphi^{\bdu}_{0,t}(\xi)\|_P \le \sqrt{\mu}\|\xi\|_P + \frac{\sqrt{\lambda_{\max}(P)}L_fR}{1-\sqrt{\mu}}.
	\end{align*}
Thus, Assumption~\ref{as:compact} is satisfied, where $\bbS_\xi$ is the ball of $\ell^2$-radius $\sqrt{\kappa(P)}\left(\|\xi\| + \frac{L_fR}{1-\sqrt{\mu}}\right)$ centered at $0$. Assumption~\ref{as:UAIS} is also satisfied by Proposition~\ref{prop:Demidovich}. The estimates in \eqref{eq:exp_stable_AFM} follow from Theorem~\ref{thm:UAIS_AFM}.
\end{proof}
The following result now follows as a direct consequence of Theorems~\ref{thm:main_apx} and \ref{thm:recurrent}:
\begin{corollary} If the system \eqref{eq:recurrent} satisfies the conditions of Theorem~\ref{thm:recurrent}, then its i/o map $\sF$ with zero initial condition can be $\eps$-approximated in the sense of Theorem~\ref{thm:main_apx} by a ReLU TCN $\wh{\sF}$ with width ${\rm polylog}(\frac{1}{\eps})$ and depth ${\rm quasipoly}(\frac{1}{\eps})$.\footnote{We say that a given quantity $N$ has \textit{quasipolynomial growth} in $1/\eps$, and write $N \le {\rm quasipoly}(1/\eps)$, if $N = O(\exp({\rm polylog}(\frac{1}{\eps})))$.}
\end{corollary}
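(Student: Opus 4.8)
The plan is to read off the two ingredients supplied by Theorem~\ref{thm:recurrent} --- the bound on the context length $m^*_\sF$ and the (linear) bound on the modulus of continuity $\omega_{t,\sF}$ --- and feed them into the width and depth formulas of Theorem~\ref{thm:main_apx}, tracking only the dependence on $1/\eps$. All quantities depending on $\kappa(P), L_f, L_g, R$, and $\mu$ will be treated as $\eps$-independent constants.

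First I would fix the trade-off parameter, say $\gamma = \tfrac12$, and apply Theorem~\ref{thm:main_apx}, which produces a ReLU TCN $\wh{\sF}$ achieving \eqref{eq:TCN_apx} with context length $m = m^*_\sF(\eps/2)$, width $m+2$, and depth $\big(\tfrac{O(R)}{\omega^{-1}_{m,\sF}(\eps/2)}\big)^{m+2}$. For the width, the first estimate in \eqref{eq:exp_stable_AFM} gives
\begin{align*}
m+2 \le \frac{2\log\big(\tfrac{4\kappa(P)L_fL_gR}{(1-\sqrt{\mu})^2\,\eps}\big)}{\log(1/\mu)} + 2 = O(\log(1/\eps)) = {\rm polylog}(1/\eps),
\end{align*}
which already settles the width claim.

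For the depth, the decisive feature is that the bound on $\omega_{t,\sF}$ in \eqref{eq:exp_stable_AFM} is \emph{linear} in $\delta$: writing $C \deq \tfrac{\sqrt{\kappa(P)}L_fL_g}{1-\sqrt{\mu}}$, the inequality $\omega_{m,\sF}(\delta) \le C\delta$ immediately yields $\omega^{-1}_{m,\sF}(\eps/2) \ge \eps/(2C)$, so the base of the depth expression is $O(1/\eps)$. Hence the depth is at most $\big(O(1/\eps)\big)^{m+2}$, and taking logarithms gives
\begin{align*}
\log(\text{depth}) \le (m+2)\cdot\log\big(O(1/\eps)\big) = O(\log(1/\eps))\cdot O(\log(1/\eps)) = O(\log^2(1/\eps)).
\end{align*}
Exponentiating, the depth is $\exp({\rm polylog}(1/\eps)) = {\rm quasipoly}(1/\eps)$, as claimed.

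I expect the only genuinely non-routine step to be conceptual rather than computational: recognizing that raising a base of polynomial size in $1/\eps$ to an exponent of only logarithmic size in $1/\eps$ produces quasipolynomial --- rather than polynomial or exponential --- growth. This is precisely where the logarithmic decay of the context length $m$, a consequence of exponential incremental stability via the Demidovich criterion, pays off; the remaining work is just bookkeeping of the $\eps$-independent constants.
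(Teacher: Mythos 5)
Your proposal is correct and is exactly the computation the paper intends: the paper states the corollary as a direct consequence of Theorems~\ref{thm:main_apx} and \ref{thm:recurrent}, and your substitution of the bounds \eqref{eq:exp_stable_AFM} into the width and depth formulas of Theorem~\ref{thm:main_apx} (with the linearity of the modulus bound giving $\omega^{-1}_{m,\sF}(\eps/2) \ge \eps/(2C)$, hence depth $\exp(O(\log^2(1/\eps)))$) is precisely that derivation.
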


\subsection{Contractivity vs.\ the Demidovich criterion}

If the contractivity condition \eqref{eq:contractive} holds and $f(x,u)$ is differentiable in $x$, then the Demidovich criterion is satisfied with $P = I_n$ and $\mu =\lambda^2$. In that case, we immediately obtain the exponential estimate $\beta(C,t) \le C\lambda^t$. However, the Demidovich criterion covers a wider class of nonlinear systems. As an example, consider a discrete-time nonlinear system of \textit{Lur'e type} (cf.~\citet{sandberg93circle,kim14lure,sarkans2016lure} and references therein):
	\begin{subequations}\label{eq:Lur'e}
	\begin{align}
		x_{t+1} &= Ax_t + B\psi(u_t - y_t) \\
		y_t &= Cx_t
	\end{align}
	\end{subequations}
	Here, the state $x_t$ is $n$-dimensional while the input $u_t$ and the output $y_t$ are scalar, so $A \in \Reals^{n \times n}$, $B \in \Reals^{n \times 1}$, and $C \in \Reals^{1 \times n}$. The map $\psi : \Reals \to \Reals$ is a fixed differentiable nonlinearity. The system in \eqref{eq:Lur'e} has the form \eqref{eq:recurrent} with $f(x,u) = Ax + B\psi(u-Cx)$ and $g(x) = Cx$, and can be realized as the negative feedback interconnection of the discrete-time linear system
	\begin{subequations}\label{eq:LTI}\begin{align}
		x_{t+1} &= Ax_t + Bv_t \\
		y_t &= Cx_t
	\end{align}
	\end{subequations}
and the nonlinear element $\psi$ using the feedback law $v_t = \psi(u_t - y_t)$. We make the following assumptions (see, e.g., \citet{sontag98control} for the requisite control-theoretic background):
	\begin{assumption}\label{as:Lur'e1} The nonlinearity $\psi : \Reals \to \Reals$ satisfies $\psi(0)=0$, and there exist real numbers $-\infty < a \le b < \infty$ such that $a \le \psi'(\cdot) \le b$.
	\end{assumption}
	\begin{assumption}\label{as:Lur'e2} $A$ is a {\em Schur matrix}, i.e., its spectral radius $\rho(A)$ is strictly smaller than $1$; the pair $(A,B)$ is {\em controllable}, i.e., the $n\times n$ matrix $[  B \,|\, AB \,|\, \ldots \,|\, A^{n-1}B]$ has rank $n$; and the pair $(A,C)$ is {\em observable}, i.e., the $n \times n$ matrix $[  C^\top \,|\, A^\top C^\top \,|\, \dots  \,|\, (A^\top)^{n-1}C^\top ]$ has rank $n$. 
	\end{assumption}
	\begin{assumption}\label{as:Lur'e3} Let ${\mathbb T} \deq \{z \in {\mathbb C} : |z|=1\}$ denote the unit circle in the complex plane. The rational function $G(z) \deq C(zI_n-A)^{-1}B$  satisfies
		\begin{align}\label{eq:Hinfty}
			\|G\|_{\cH_\infty({\mathbb T})} \deq \sup_{z \in {\mathbb T}} |G(z)| < \gamma^{-1}
		\end{align}
		for some $\gamma > 0$ such that $r^2 \le \gamma^2$ for all $a \le r \le b$.
	\end{assumption}
\begin{remark}{\em Assumption~\ref{as:Lur'e1} imposes a {\em slope condition} on $\psi$ and is standard in the analysis of Lur'e systems \citep{tsypkin64circle,sandberg91structure,kim14lure}. The function $G(z)$ is the {\em transfer function} of the linear system \eqref{eq:LTI}. Assumption~\ref{as:Lur'e2} states that the triple $(A,B,C)$ is a {\em minimal realization} of $G$. The quantity $\|G\|_{\cH_\infty({\mathbb T})}$ appearing in Eq.~\eqref{eq:Hinfty} in Assumption~\ref{as:Lur'e3} is the {\em $\cH_\infty$-norm} of $G$ on the unit circle  in the complex plane. Assumptions~\ref{as:Lur'e2} and \ref{as:Lur'e3} are also common and are in the spirit of the well-known {\em circle criterion} \citep{tsypkin64circle,sandberg93circle}.}
\end{remark}
With these preliminaries out of the way, we have the following:
\begin{proposition}\label{prop:Lur'e} Suppose that the system \eqref{eq:Lur'e} satisfies Assumptions~\ref{as:Lur'e1}--\ref{as:Lur'e3}. Then it satisfies the discrete-time Demidovich criterion with $\bbX = \Reals^n$ and $\bbU = \Reals$, and moreover $\mu > \rho(A)^2$.
\end{proposition}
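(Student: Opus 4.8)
The plan is to verify the Demidovich criterion \eqref{eq:Demidovich} directly by exhibiting a suitable $P \succ 0$ and $\mu \in (0,1)$, extracting $P$ from the frequency-domain Assumption~\ref{as:Lur'e3} via the discrete-time Kalman--Yakubovich--Popov (bounded real) lemma. First I would record that the hypotheses of Proposition~\ref{prop:Demidovich} hold trivially: $\bbX = \Reals^n$ is convex and positively invariant, and $f(x,u) = Ax + B\psi(u-Cx)$ is differentiable in $x$ since $\psi$ is differentiable. Computing the Jacobian by the chain rule gives $\frac{\partial}{\partial x}f(x,u) = A - \psi'(u-Cx)\,BC = A - \theta BC$, where $\theta \deq \psi'(u-Cx) \in [a,b]$ by Assumption~\ref{as:Lur'e1}. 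Thus \eqref{eq:Demidovich} reduces to the robust contraction requirement: find $P\succ0$ and $\mu<1$ with $(A-\theta BC)^\top P (A-\theta BC) \preceq \mu P$ for every $\theta \in [a,b]$. Because Assumption~\ref{as:Lur'e3} forces $[a,b]\subseteq[-\gamma,\gamma]$, it suffices to prove this for all $\theta\in[-\gamma,\gamma]$ --- a symmetric range that turns the problem into a clean small-gain condition and conveniently includes $\theta = 0$.

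Next I would convert the $\cH_\infty$ bound into a Lyapunov inequality. Viewing the linear part as $x_{t+1}=Ax_t+Bw_t$ with output $z_t = \gamma Cx_t$, the transfer function from $w$ to $z$ is $\gamma G$, and Assumption~\ref{as:Lur'e3} reads $\|\gamma G\|_{\cH_\infty(\bbT)}<1$. Since $A$ is Schur and $(A,B)$, $(A,C)$ are controllable and observable (Assumption~\ref{as:Lur'e2}), the strict discrete-time bounded real lemma --- the Lyapunov form of the circle criterion already cited in the paper --- yields $P\succ0$ and a margin $\epsilon>0$ with the dissipation inequality
\begin{align*}
\|Ax + Bw\|_P^2 \le \|x\|_P^2 - \gamma^2(Cx)^2 + w^2 - \epsilon(\|x\|^2+w^2)
\end{align*}
for all $x\in\Reals^n$, $w\in\Reals$. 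I would then substitute the incremental feedback $w = -\theta Cx$ with $|\theta|\le\gamma$: then $w^2 = \theta^2(Cx)^2 \le \gamma^2(Cx)^2$, so $w^2-\gamma^2(Cx)^2\le 0$, while $Ax+Bw = (A-\theta BC)x$. This leaves $\|(A-\theta BC)x\|_P^2 \le \|x\|_P^2 - \epsilon\|x\|^2 \le (1-\epsilon/\lambda_{\max}(P))\|x\|_P^2$, so the criterion holds with any $\mu \ge \mu^* \deq 1-\epsilon/\lambda_{\max}(P) < 1$, uniformly over $\theta\in[-\gamma,\gamma]\supseteq[a,b]$ and hence over all $x\in\bbX$, $u\in\bbU$.

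For the sharpened bound $\mu>\rho(A)^2$, I would use that $\theta=0$ lies in the range, so the nominal matrix $A$ itself satisfies $A^\top P A \preceq \mu^* P$ for the constructed $P$. Applying this to a (possibly complex) eigenvector $v$ of $A$ with $|\lambda|=\rho(A)$ gives $\rho(A)^2\, v^*Pv = (Av)^*P(Av) \le \mu^* v^* P v$, whence $\mu^* \ge \rho(A)^2$. Because the criterion is monotone in $\mu$ and $\rho(A)^2<1$, I can then select the final $\mu$ in the nonempty interval $(\rho(A)^2,1)\cap[\mu^*,1)$, which lies in $(0,1)$ and satisfies $\mu>\rho(A)^2$ as claimed.

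I expect the main obstacle to be the careful invocation of the strict discrete-time bounded real lemma with the correct sign and minimality conventions, and in particular the extraction of a \emph{uniform} margin $\epsilon>0$ (hence a uniform $\mu<1$) rather than merely a strict pointwise inequality $\|(A-\theta BC)x\|_P<\|x\|_P$ that could degrade to $1$ as $\|x\|\to\infty$ or along a sequence of $\theta$. Once the margin is secured, the remaining work --- the Jacobian computation, the symmetric-range reduction, the feedback substitution, and the eigenvector argument --- is routine bookkeeping.
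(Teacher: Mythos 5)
Your proposal is correct, but it reaches the Demidovich inequality by a genuinely different route than the paper. The paper first \emph{rescales} the transfer function: since $A$ is Schur and $\|G\|_{\cH_\infty(\bbT)} < \gamma^{-1}$, continuity of $r \mapsto \|G(r\cdot)\|_{\cH_\infty(\bbT)}$ yields an $r_0 \in (\rho(A),1)$ with $\|\gamma G(r_0\cdot)\|_{\cH_\infty(\bbT)} < 1$; it then applies the \emph{equality} (spectral-factorization) form of the Discrete-Time Bounded-Real Lemma to the scaled minimal realization $(A/r_0,\, B,\, \gamma C/r_0)$, and a completion-of-squares identity gives $(A-\theta BC)^\top P (A-\theta BC) - r_0^2 P \preceq 0$ for all $\theta \in [a,b]$, so that $\mu = r_0^2$ is explicit and the sharpened bound $\mu > \rho(A)^2$ is automatic from $r_0 > \rho(A)$. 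You instead apply the \emph{strict} bounded-real lemma to the unscaled system, obtaining the dissipation inequality with uniform margin $\eps$, substitute the worst-case incremental feedback $w = -\theta Cx$ over $[a,b] \subseteq [-\gamma,\gamma]$, and arrive at $\mu^* = 1 - \eps/\lambda_{\max}(P) < 1$; you then need two extra steps the paper does not --- the eigenvector argument giving $\mu^* \ge \rho(A)^2$ (valid: real symmetric semidefinite inequalities extend to complex vectors, so $\rho(A)^2 v^* P v \le \mu^* v^* P v$), and monotonicity of the criterion in $\mu$ to upgrade to a strict inequality $\mu > \rho(A)^2$. Both arguments are sound and hinge on the same circle-criterion mechanism; the trade-off is that the paper's prescaling buys an explicit $\mu$ and the spectral-radius bound for free while needing only the equality-form lemma, whereas your version avoids the scaling and the continuity-of-$\cH_\infty$-norm argument but requires the strict LMI form of the lemma (standard, given Assumption~\ref{as:Lur'e2} and the strict norm bound) together with the closing eigenvector/monotonicity bookkeeping. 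Your self-identified concern about uniformity of the margin is not actually an obstacle: a strict LMI $\prec 0$ yields $\preceq -\eps I$ by homogeneity and compactness of the unit sphere, so the margin is automatic.
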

The crucial ingredient in the proof is the Discrete-Time Bounded-Real Lemma \citep{vaidyanathan85dtbr}, which guarantees the existence of the matrix $P$ appearing in the Demidovich criterion. The main takeaway here is that the function $f(x,u) = Ax+B\psi(u-Cx)$ need not be contractive (i.e., it may be the case that $P \neq I_n$), but it will be contractive in the $\|\cdot\|_P$ norm.

\section{Comparison of architectures}
\label{sec:comparison}

So far, we have shown that any i/o map $\sF$ with approximately finite memory can be approximated by a ReLU temporal convolutional net. We have also considered recurrent models and shown that any incrementally stable recurrent model has approximately finite memory and can therefore be approximated by a ReLU TCN. As far as their approximation capabilities are concerned, both recurrent models and autoregressive models like TCNs are equivalent, since any finite-memory i/o map of the form \eqref{eq:FM} admits the state-space realization
\begin{align*}
	& x^1_{t+1} = x^2_t, 
	x^2_{t+1} = x^3_t, \ldots, 
	x^{m-1}_{t+1} = x^m_t,
	x^m_{t+1} = u_t \\
	& y_t = f(x^1_t,x^2_t,\ldots,x^m_t,u_t)
\end{align*}
of the tapped delay line type, with zero initial condition $(x^1_0,\ldots,x^m_0) = (0,\ldots,0)$. (Compared to \eqref{eq:recurrent}, we are allowing a direct `feedthrough' connection from the input $u_t$ to the output $y_t$.) The advantage of autoregressive models like TCNs shows up during training and regular operation, since shifted copies of the input sequence can be efficiently processed in parallel rather than sequentially.

Another point worth mentioning is that, while the construction in the proof of Theorem~\ref{thm:main_apx} makes use of ReLU nets as a universal function approximator, any other family of universal approximators can be used instead, for example, multivariate polynomials or rational functions. In fact, if one uses multivariate polynomials to approximate the functionals $\tilde{\sF}_t$, the resulting family of i/o maps is known as the (discrete-time) finite Volterra series \citep{boyd85fading}, and has been used widely in the analysis of nonlinear systems. However, TCNs generally provide a more parsimonious representation. To see this, consider the following (admittedly contrived) example of an i/o map:
\begin{align}\label{eq:ReLU_example}
	(\sF\bdu)_t = \ReLU\Bigg(\sum^\infty_{s=0} h_s u_{t-s}\Bigg),
\end{align}
where the filter coefficients $h_t$ have the exponential decay property $|h_t| \le C\lambda^t$ for some $C > 0$ and $\lambda \in (0,1)$. It is not hard to show that $\sF$ has exponentially fading memory, and a very simple $\eps$-approximation by a TCN is obtained by zeroing out all of the filter coefficients $h_s,s > m \sim \log (\frac{1}{\eps})$:
\begin{align*}
	(\wh{\sF}\bdu)_t = \ReLU\Bigg(\sum^m_{s=0}h_s u_{t-s}\Bigg).
\end{align*}
However, any $\eps$-approximation for $\sF$ using Volterra series would need ${\rm poly}(\frac{1}{\eps})$ terms, since the best polynomial $\eps$-approximation of the ReLU on any compact interval has degree $\Omega(\frac{1}{\eps})$ \citep[Chap.~9, Thm.~3.3]{devore1993apx}. On the other hand, if we consider an i/o map of the form \eqref{eq:ReLU_example}, but with a degree-$d$ univariate polynomial instead of ReLU, then we can $\eps$-approximate it with a TCN of depth $O(d + \log \frac{d}{\eps})$ and $O(d \log \frac{d}{\eps})$ units \citep{liang2017deep}.

\subsubsection*{Acknowledgments} This work was supported in part by the National Science Foundation under the Center for Advanced Electronics through Machine Learning (CAEML) I/UCRC award no.~CNS-16-24811. 


\bibliography{afm.bib,references.bib}

\newpage

\appendix
\section{Omitted proofs}

\setcounter{equation}{0}
\renewcommand\theequation{A.\arabic{equation}}

\begin{proof}[Proof of Proposition~\ref{prop:fading_afm}] Suppose $\sF$ satisfies Assumptions~\ref{as:AFM} and \ref{as:cont}. Fix some $\eps > 0$ and let $m = m^*_\sF(\eps/3)$ and $\delta = w_m\omega^{-1}_{m,\sF}(\eps/3)$. Now fix some $t \in \bbZ_+$ and consider any two $\bdu,\bdv \in \cM(R)$ such that
	\begin{align}\label{eq:dist_uv}
	\max_{s \in \{0,\ldots,t\}} w_{t-s}|u_s - v_s| < \delta.
	\end{align}
Using the same reasoning as in the proof of Theorem~\ref{thm:main_apx}, we can write $(\sF\sW_{t,m}\bdu)_t = \tilde{\sF}_m(\bdu_{t-m:t})$ and $(\sF\sW_{t,m}\bdv)_t = \tilde{\sF}_m(\bdv_{t-m:t})$, where, as before, we set $u_s = v_s = 0$ for $s < 0$. From the monotonicity of $\bdw$ and \eqref{eq:dist_uv} it follows that
\begin{align*}
	\| \bdu_{t-m:t} - \bdv_{t-m:t} \|_\infty \le \frac{1}{w_m}\max_{s \in \{t-m,\ldots,t\}} w_{t-s}|u_s - v_s| < \omega^{-1}_{m,\sF}(\eps/3),
\end{align*}
which implies that
\begin{align*}
	|(\sF \sW_{t,m}\bdu)_t - (\sF \sW_{t,m}\bdv)_t| &= |\tilde{\sF}_m(\bdu_{t-m:t})-\tilde{\sF}_m(\bdv_{t-m:t})| < \eps/3.
\end{align*}
Altogether, we see that \eqref{eq:dist_uv} implies that
\begin{align*}
	|(\sF \bdu)_t - (\sF \bdv)_t| &\le |(\sF\bdu)_t - (\sF\sW_{t,m}\bdu)_t| + |(\sF \sW_{t,m}\bdu)_t - (\sF \sW_{t,m}\bdv)_t| + |(\sF\bdv)_t - (\sF\sW_{t,m}\bdv)_t| \nonumber\\
	&< \eps/3 + \eps/3 + \eps/3 = \eps,
\end{align*}
which leads to \eqref{eq:AFM_to_w}.

Now suppose that $\sF$ has fading memory w.r.t.\ $\bdw$. Given $\eps > 0$, let $\delta = \alpha^{-1}_{\bdw,\sF}(\eps)$ and choose any $m \in \bbZ_+$, such that $w_m < \delta/R$. If $t < m$, then $\bd{u}_{0:t} = (\sW_{t,m}\bdu)_{0:t}$, and thus $(\sF\bdu)_t = (\sF\sW_{t,m}\bdu)_t$. On the other hand, if $t \ge m$, then, for any $\bdu \in \cM(R)$,
	\begin{align*}
		\max_{s \in \{0,\ldots,t\}} |u_s - (\sW_{t,m} \bdu)_s| = \begin{cases}
		0, & t-m \le s \le t \\
		|u_s|, & s < t-m
		\end{cases}
	\end{align*}
	and therefore, by the monotonicity of $\bdw$ and the choice of $m$,
	\begin{align*}
		\max_{s \in \{0,\ldots,t\}} w_{t-s} |u_s - (\sW \bdu_{t,m})_s| &= \max_{s < t-m} w_{t-s}|u_s| \le w_m \|\bdu\|_\infty < \delta,
	\end{align*}
	which implies that $|(\sF\bdu)_t - (\sF\sW_{t,m}\bdu)_t| < \eps$. Consequently, $m^*_\sF(\eps) \le m$. Moreover, since the elements of $\bdw$ take values in $(0,1]$, it follows from definitions that, for any $\bdu,\bdv \in \cM(R)$ and any $t$, 
	\begin{align*}
		\|\bdu_{0:t} - \bdv_{0:t}\|_\infty < \delta \quad \Longrightarrow \quad \max_{s \in \{0,\ldots,t\}} w_{t-s}|u_s-v_s| < \delta \quad \Longrightarrow \quad |(\sF\bdu)_t - (\sF\bdv)_t| \le \alpha_{\bdw,\sF}(\delta).
	\end{align*}
This establishes \eqref{eq:w_to_AFM}.
\end{proof}

	\begin{proof}[Proof of Proposition~\ref{prop:UAIS_error_est}]
The family of mappings $\varphi^{\bdu}_{s,t}(\cdot)$ has the following \textit{semiflow property}: for any input $\bdu$ and any $0 \le r \le s \le t$,
\begin{align}\label{eq:semiflow}
	\varphi^{\bdu}_{r,t}(\xi) = \varphi^{\bdu}_{s,t}(\varphi^{\bdu}_{r,s}(\xi)).
\end{align}		
By telescoping and by the semiflow property \eqref{eq:semiflow}, we have
		\begin{align}
			\varphi^{\bdu}_{0,t}(\xi) - \varphi^{\tilde{\bdu}}_{0,t}(\xi) &= \sum^{t-1}_{s=0} \left(\varphi^{\bdu}_{s,t}(\varphi^{\tilde{\bdu}}_{0,s}(\xi))-\varphi^{\bdu}_{s+1,t}(\varphi^{\tilde{\bdu}}_{0,s+1}(\xi))\right) \nonumber\\
			&= \sum^{t-1}_{s=0} \left(\varphi^{\bdu}_{s+1,t}(\varphi^{\bdu}_{s,s+1}(\varphi^{\tilde{\bdu}}_{0,s}(\xi)))-\varphi^{\bdu}_{s+1,t}(\varphi^{\tilde{\bdu}}_{0,s+1}(\xi))\right).\label{eq:telescoping}
		\end{align}
Using the fact that $\varphi^{\bdu}_{s,s+1}(\varphi^{\tilde{\bdu}}_{0,s}(\xi)) = \varphi^{\bdu}_{s,s+1}(f(\varphi^{\tilde{\bdu}}_{0,s}(\xi),u_s))$ and the stability property \eqref{eq:UAIS}, 
\begin{align*}
	\left\| \varphi^{\bdu}_{s+1,t}(\varphi^{\bdu}_{s,s+1}(\varphi^{\tilde{\bdu}}_{0,s}(\xi)))-\varphi^{\bdu}_{s+1,t}(\varphi^{\tilde{\bdu}}_{0,s+1}(\xi))\right\| \le \beta\left( \| f(\tilde{x}_s,u_s) - f(\tilde{x}_s,\tilde{u}_s)\|, t-s-1\right).
\end{align*}
Substituting this into \eqref{eq:telescoping}, we get \eqref{eq:UAIS_error_est}.
\end{proof}

\begin{proof}[Proof of Proposition~\ref{prop:Lur'e}] Since the matrix $A$ is Schur, the function
	\begin{align*}
		g(r) \deq \sup_{z \in {\mathbb T}} |G(rz)| = \|G(r\cdot)\|_{\cH_\infty({\mathbb T})}, \qquad r > \rho(A)
	\end{align*}
is continuous.  In particular, there exists some $r_0 \in (\rho(A),1)$, such that $g(r_0) < g(1) < \gamma^{-1}$. Consequently, the rational function
\begin{align*}
	H(z) \deq \gamma G(r_0z) = \frac{\gamma C}{r_0}\left(zI_n - \frac{A}{r_0}\right)^{-1} B
\end{align*}
is well-defined for all $z \in {\mathbb C}$ with $|z| \ge r_0$, and we have the following:
\begin{itemize}
	\item $\frac{A}{r_0}$ is a Schur matrix;
	\item the pair $(\frac{A}{r_0},B)$ is controllable;
	\item the pair $(\frac{A}{r_0}, \frac{\gamma C}{r_0})$ is observable;
	\item $\|H\|_{\cH_\infty({\mathbb T})} < 1$.
\end{itemize}
Then, by the Discrete-Time Bounded-Real Lemma \citep{vaidyanathan85dtbr}, there exist real matrices $L,W$ and a symmetric positive definite matrix $P \in \Reals^{n \times n}$, such that
\begin{subequations}\label{eq:DTBR}
	\begin{align}
		A^\top P A + \gamma^2 C^\top C + r^2_0 L^\top L &= r^2_0 P \\
		B^\top P B + W^\top W &= I_n \\
		A^\top P B + r_0 L^\top W &= r_0 I_n.
	\end{align}
\end{subequations}
From \eqref{eq:DTBR}, for any $\theta \in \Reals$ we have
\begin{align*}
&	(A-\theta BC)^\top P (A-\theta BC) - r^2_0 P\nonumber\\
 &\qquad = A^\top P A - \theta(C^\top B^\top P A + A^\top P BC) + \theta^2 C^\top B^\top P B C - r^2_0P \\
 &\qquad = (\theta^2 - \gamma^2) C^\top C - (r_0 L - \theta WC)^\top (r_0 L - \theta WC).
\end{align*}
Let $\mu \deq r^2_0$. Then, since $\gamma^2 \ge \theta^2$ for all $\theta \in [a,b]$, it follows that
\begin{align*}
	(A - \theta BC)^\top P (A-\theta BC) - \mu P \preceq 0, \qquad a \le \theta \le b.
\end{align*}
Since
\begin{align*}
	\frac{\partial}{\partial x} f(x,u) &= \frac{\partial}{\partial x} \left( Ax + B\psi(u-Cx)\right) = A - \psi'(u-Cx) BC
\end{align*}
and $\psi'(u-Cx) \in [a,b]$ for all $x$ and $u$, the proposition is proved.
\end{proof}

\end{document}